\documentclass{article} %
\PassOptionsToPackage{numbers, compress}{natbib}
\usepackage{arxiv}
\usepackage{natbib}
\usepackage[dvipsnames]{xcolor}
\usepackage{graphicx}
\usepackage{caption}
\usepackage{subcaption}
\usepackage{booktabs}
\usepackage{amssymb}%
\usepackage{pifont}%

\usepackage{amsmath,amsfonts,bm}

\def\1{\bm{1}}

\DeclareMathAlphabet{\mathsfit}{\encodingdefault}{\sfdefault}{m}{sl}
\SetMathAlphabet{\mathsfit}{bold}{\encodingdefault}{\sfdefault}{bx}{n}

\newcommand{\sigmoid}{\sigma}

\usepackage{multirow}
\usepackage{nicematrix}
\usepackage{hyperref}
\usepackage{url}
\usepackage{soul}
\usepackage[textwidth=2.5cm]{todonotes}
\usepackage{amsmath, amsthm, amssymb}
\usepackage{wrapfig}
\usepackage{nicefrac}
\usepackage{enumitem}

\newcounter{daggerfootnote}
\newcommand*{\daggerfootnote}[1]{%
    \setcounter{daggerfootnote}{\value{footnote}}%
    \renewcommand*{\thefootnote}{\fnsymbol{footnote}}%
    \footnote[2]{#1}%
    \setcounter{footnote}{\value{daggerfootnote}}%
    \renewcommand*{\thefootnote}{\arabic{footnote}}%
    }
    
\title{ReLU Strikes Back: \\
Exploiting Activation Sparsity in Large Language Models}

\author{Iman Mirzadeh\daggerfootnote{} \And Keivan Alizadeh \And  Sachin Mehta \And Carlo C Del Mundo \AND Oncel Tuzel \And Golnoosh Samei \And Mohammad Rastegari \And Mehrdad Farajtabar\daggerfootnote{}}

\newcommand\figref[1]{Fig.~\ref{#1}}
\newcommand\secref[1]{Sec.~\ref{#1}}
\newcommand\tabref[1]{Tab.~\ref{#1}}
\newcommand{\relu}{ReLU}
\newcommand{\silu}{SiLU}
\newcommand{\gelu}{GELU}
\newcommand\negspace[1]{}

\date{}

\begin{document}

\maketitle
\def\thefootnote{$\dagger$}\footnotetext{Corresponding authors: \texttt{\{imirzadeh,farajtabar\}@apple.com } }\def\thefootnote{\arabic{footnote}}

\begin{abstract}
Large Language Models (LLMs) with billions of parameters have drastically transformed AI applications. However, their demanding computation during inference has raised significant challenges for deployment on resource-constrained devices. Despite recent trends favoring alternative activation functions such as GELU or SiLU, known for increased computation, this study strongly advocates for reinstating ReLU activation in LLMs. We demonstrate that using the ReLU activation function has a negligible impact on convergence and performance while significantly reducing computation and weight transfer. This reduction is particularly valuable during the memory-bound inference step, where efficiency is paramount. Exploring sparsity patterns in ReLU-based LLMs, we unveil the reutilization of activated neurons for generating new tokens and leveraging these insights, we propose practical strategies to substantially reduce LLM inference computation up to three times, using ReLU activations with minimal performance trade-offs.
\end{abstract}
\vspace{2mm}

\section{Introduction}
The widespread excitement surrounding Large Language Models (LLMs) has sparked significant interest in leveraging AI across diverse domains~\cite{brown2020language,chowdhery2022palm,bubeck2023sparks}. However, realizing the potential of LLMs is challenged by their significant computational and memory requirements during inference~\cite{pope2023efficiently,kim2023full,aminabadi2022deepspeed}. To enhance the inference efficiency\footnote{In this work, we use FLOPS as a proxy for inference efficiency. In Appendix~\ref{sec:appendix-discussion-efficiency}, we demonstrate that for LLMs with activation sparsity, FLOPS can serve as a good approximation of real-world efficiency due to the structure inherent in activation sparsity (e.g., skipping the entire row corresponding to zero activations).}, various techniques have been explored, including quantization~\cite{dettmers2022llm,liu2023llmqat}, speculative decoding~\cite{kim2023speculative}, pruning~\cite{ma2023llm,sun2023simple}, and weight sparsification~\cite{frantar2023sparsegpt,dong2023blockwise}. Among these techniques, achieving activation sparsity offers a compelling advantage by providing a favorable balance between accuracy and speedup, especially on modern hardware like GPUs~\cite{liu2023deja}.

Notably, employing the Rectified Linear Unit (ReLU) activation function~\cite{Fukushima1969VisualFE} in neural networks is recognized for inducing sparse activations and has been adopted in various prior works~\cite{he2016deep, kurtz2020inducing,li2022lazy,sheng2023flexgen}. To reaffirm this property, we employ the OPT model~\cite{OPTpaper}, utilizing \relu, and measure the sparsity of activations in the Feed Forward Network (FFN) between the fully connected layers. As illustrated in \figref{fig:act_compare_sparsity}, all layers exhibit sparsity exceeding $90\%$. On average, across all layers, this activation sparsity results in substantial weight transfer (I/O) savings between the GPU and CPU, impacting $95\%$ of the rows of the down projection layer's weights (\figref{fig:mat-vec}). This reduction directly translates to computation savings, as for these rows, the result of the matrix multiplication operation will be zero. Furthermore, unlike unstructured sparsity (e.g., weight pruning), this type of sparsity is more hardware-friendly due to zeroing more extensive and structured chunks, such as rows or columns~\cite{jaszczur2021sparse,liu2023deja}. For OPT models, this sparsity reduces the computation required for inference from 6.6G FLOPS (Floating Point Operations Per Second) to 4.5G FLOPS per token, resulting in a $32\%$ computation saving (\figref{fig:act_compare_fine_tune}).

However, a recent trend has emerged, favoring variations of ReLU that are smoother but more complex~\cite{hendrycks2016gaussian,ramachandran2017searching}. These alternatives have gained popularity due to their slightly faster convergence and improved final accuracy~\cite{shazeer2020glu}. For example, PaLM~\cite{chowdhery2022palm} and Llama models~\cite{Llamav1paper} adopt SiLU\footnote{To be more precise, the mentioned models use SwiGLU activation function, but in this work, we focus on the gating module that uses SiLU (Swish) function.}~\cite{hendrycks2016gaussian, elfwing2018sigmoid, ramachandran2017searching}, while MPT~\cite{MosaicMPT} and Falcon models~\cite{FalconPaper} use \gelu~\cite{hendrycks2016gaussian}. Nonetheless, as demonstrated in \figref{fig:act_compare_fine_tune}, when we finetune several pretrained LLMs with different activation functions, their performance does not change significantly (within a specific model), while \relu~models require much less computation.

In this paper, we re-evaluate using ReLU for LLMs. We are motivated by the pragmatic consideration that, in many real-world applications and computational platforms capable of supporting sparse vector-matrix multiplications, computational efficiency during \emph{inference} outweighs the one-time computational cost incurred during training.  We make the following contributions:

\begin{itemize}
\item We demonstrate that when trained from scratch, there is no significant difference in terms of performance between different activation functions. However, in terms of computational requirements during inference, ReLU activations prove significantly lighter~(\secref{sec:from-scratch}).

\item Considering that many modern LLMs (e.g., Llama and Falcon) have been trained with non-ReLU activations, and it is not cost-effective to train them from scratch, we investigate fine-tuning these models with ReLU activations. We show that the models quickly regain their original performance across various reasoning and reading comprehension tasks~(\secref{sec:relu_stage_1}). Moreover, we show that by leveraging the activation sparsity of ReLU layers and inserting additional ReLU layers after normalization layers, we can further reduce inference FLOPS by up to threefold~(\secref{sec:relu_stage_2}).

\item In addition to their computational benefits, we present two promising applications of activation sparsity that can inspire future work. Firstly, we demonstrate that LLMs with ReLU activations reuse a significant portion of already activated neurons during token generation, a phenomenon we term \emph{aggregated sparsity} (\secref{sec:aggregated-sparsity}). This reusability leads to an inference speedup for speculative decoding (\secref{sec:speculative-decoding}). Additionally, we show that studying the pre-activations of pretrained LLMs can guide the selection of unconventional activation functions (e.g., \emph{shifted ReLU}), achieving up to 90\% sparsity while maintaining performance similar to ReLU activation ~(\secref{sec:shifted-relu}).

\end{itemize}
Overall, we believe our work represents a significant step toward leveraging the potential of sparse activation functions for faster and more efficient inference in large language models.

\begin{figure}[t]
\centering
\begin{subfigure}{.36\textwidth}
  \centering
  \includegraphics[width=\textwidth]{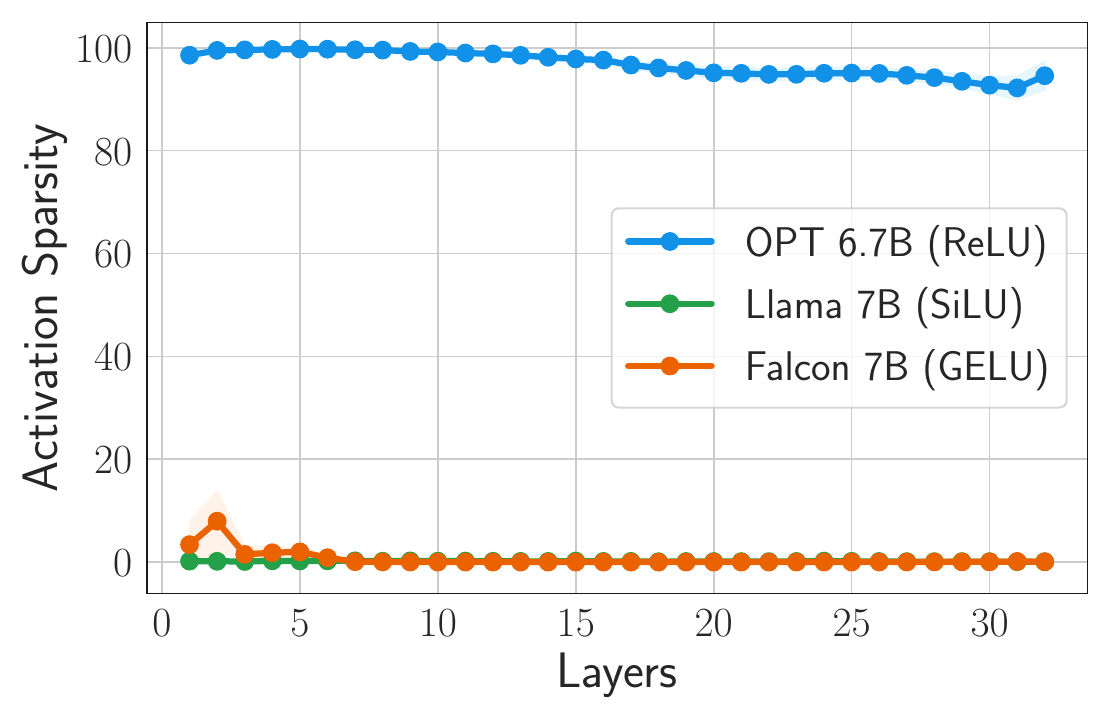}
\caption{Sparsity of different models}
  \label{fig:act_compare_sparsity}
\end{subfigure}\hfill
\begin{subfigure}{.26\textwidth}
  \centering
  \includegraphics[width=\textwidth]{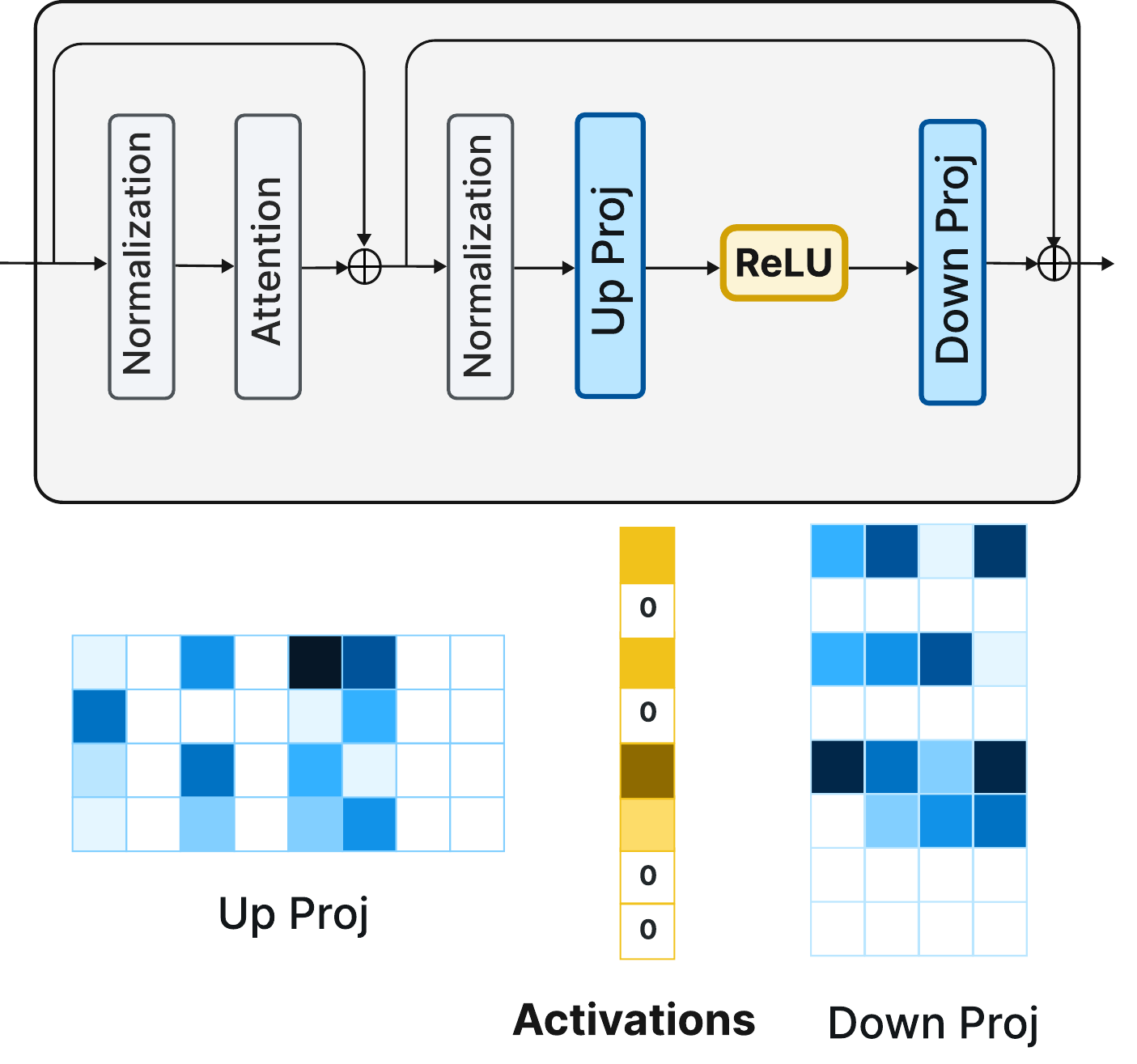}
    \caption{Sparsity for Efficiency}
  \label{fig:mat-vec}
\end{subfigure}\hfill
\begin{subfigure}{.31\textwidth}
  \centering
  \includegraphics[width=\textwidth]{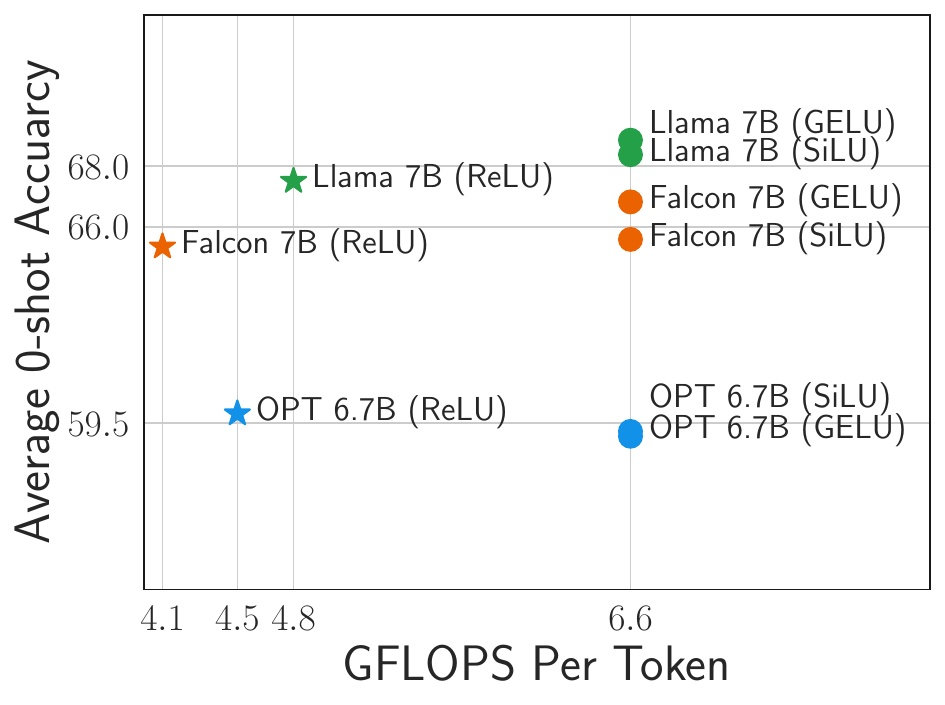}
\caption{Accuracy vs. Computation}
  \label{fig:act_compare_fine_tune}
\end{subfigure}
\caption{
\textbf{(a)} Activation Sparsity of different pretrained models: ReLU-based OPTs show significantly higher sparsity.
\textbf{(b)} Zeroed out entries after ReLU save compute in large semi-structured chunks (e.g., rows). 
\textbf{(c)} Comparison of inference efficiency and performance of the different models with different activation functions after fine-tuning: The choice of activation function does not significantly impact the accuracy, as any of \gelu, \silu, or ReLU can be used on all three models and achieve the same level of accuracy as the original activation function. However, using ReLU can provide an additional benefit of leading to activation sparsity and faster inference.} 
\label{fig:intro}
\end{figure}

\vspace{5mm}
\section{Related Works}

\textbf{Activation Functions in Transformers.} The original Transformer architecture~\cite{vaswani2023attention} was proposed with the ReLU activation function~\cite{Fukushima1969VisualFE}, following the popularity of ReLU at the time. Later, several studies aimed to improve the ReLU activation function by increasing its smoothness~\cite{hendrycks2016gaussian} and/or including parameterized gating mechanisms, such as GELU, SiLU, GLU, and SwiGLU~\cite{10.5555/3305381.3305478,ramachandran2017searching}. Earlier studies demonstrated the benefits of these alternatives to ReLU for transformers~\cite{shazeer2020glu,narang2021transformer}, but on a small scale (e.g., they trained models up to a couple of 100M parameters with at most 35B tokens, while in this work, we train 1B parameter models on more than 100B tokens). However, we believe the impact of activation functions on performance is marginal, following scaling laws~\cite{ScalingLawOpenAI,ScalingLawChinchilla}, which state that architectural changes do not significantly impact performance.

\textbf{Activation Sparsity.} Existing research shows increased sparsity reduces inference and training times~\cite{kurtz2020inducing, han2023retrospective, song2021training,zhang2022moefication,li2022large,liu2023deja}. For instance, \citet{jaszczur2021sparse} uses ReLU and added a controller to both promote and predict sparsity, while other works only use prediction modules to predict the activation masks~\cite{liu2023deja}. We note that the mentioned works assume the pretrained model has already been using a sparse ReLU activation, and hence, only training a separate module to predict sparsity could be enough. However, we note that most LLMs pretrained these days do not use ReLU, and we aim to bridge this gap. Moreover, these works focus only on a single transformer architecture while we focus on various architectures so our findings can be practical. Finally, we show that there is no need to train a separate prediction module that complicates the computation graph, and using efficient ReLU layers can be enough.

\textbf{Speculative Decoding and Sparsity.} Speculative decoding combats latency under memory constraints using a smaller model for token prediction and a larger model for verification \cite{leviathan2023fast, kim2023speculative}. Investigating its integration with sparsity, we find activation sparsity exhibits a temporal pattern, enhancing speculative decoding. We provide guidelines for parameter selection when incorporating sparsity.

 We defer other lines of related works that are orthogonal to our work, such as model compression techniques, sparse attention methods, and Mixture of Experts (MoE) to Appendix~\ref{sec:appendix-relatedworks}.

\section{Does the Activation Function Impact Performance?}
\label{sec:from-scratch}

This section first overviews our experimental setup, including models, data, and evaluations. Then, by training various models from scratch with different activation functions, we demonstrate that changing activation functions minimally impacts performance. However, the impact on inference efficiency is substantial.

\subsection{Experimental setup}

\textbf{Models.} We use open source pretrained models such as OPT~\cite{OPTpaper}, Llama (v1)~\cite{Llamav1paper}, and Falcon~\cite{FalconPaper} as they use different architectures and pretraining setup (e.g., attention/FFN structure/normalization, activation functions), allowing our study covers a wider range of models.

\textbf{Datasets.} We use the RefinedWeb dataset~\cite{RefinedWebDataset}, for our pretraining in~\secref{sec:from-scratch-comparison} and finetuning pretrained models in ~\secref{sec:relufication}. We chose RefinedWeb because it is a high-quality subset of Common Crawl, which is often used in the pretraining phase of LLMs, including Llama, Falcon, and OPT.  We also use the validation split of WikiText~\cite{Wikitext} for measuring the sparsity and recording preactivation distributions of various pretrained models. However, our conclusions hold for other datasets we have tested. 

\textbf{Training and Finetuning.} For finetuning the pretrained models, we follow the original pretraining recipe, except we use a fixed learning rate of 1.5e-5 for Llama 7B, Falcon 7B, and OPT 6.7B models. In addition, we use the AdamW optimizer~\cite{loshchilov2017decoupled} for our finetuning with ZeRO stage 1~\cite{ZeRO}, where we shard the optimizer states across different GPUs. For pretraining OPT 1.3B models from scratch in \secref{sec:from-scratch-comparison}, we follow the OPT training recipe.

\textbf{Evaluation.} For our \emph{performance} evaluation, we use the few-shot tasks from Language Model Evaluation Harness~\cite{eval-harness}. We select these tasks such that they can measure various abilities of the models (e.g., reading comprehension, reasoning, etc.), and we aim to be consistent with other works in the literature to make the comparison easier. Consistent with the other sections, we compare activation sparsity as a measure of \emph{efficiency}. 

Further details regarding the relationship between activation sparsity, FLOPS, and inference efficiency are discussed in Appendix~\ref{sec:appendix-discussion-efficiency}.

\subsection{Training from scratch: performance and sparsity}
\label{sec:from-scratch-comparison}
While the previous literature suggests that non-ReLU variants can improve the performance of transformers~\cite{shazeer2020glu,narang2021transformer}, we argue the impact is marginal at best. To support our claim, we train the OPT 1.3B model from scratch on a hundred billion tokens of the RefinedWeb datasets with different activation functions, including ReLU, \silu, and GELU. All these activation functions can be viewed as $\text{f(x)} = x \cdot \sigmoid(\beta x)$, where $\beta$ controls the gating part (smoothed cutoff threshold) of the activation function (see \figref{fig:scratch-comp-actfn-zoomed-out}). For $\beta=1$, we will have \silu ($x \cdot \sigmoid(x)$), and $\beta=1.7$ is a good approximation of \gelu. Finally, as $\beta \to \infty$, the activation function becomes closer to ReLU. To further explore the spectrum of ReLU to \silu we add another one with $\beta=8$.

\begin{figure}[t]
\negspace{-5mm}
\centering
\begin{subfigure}{.325\textwidth}
  \centering
  \includegraphics[width=\textwidth]{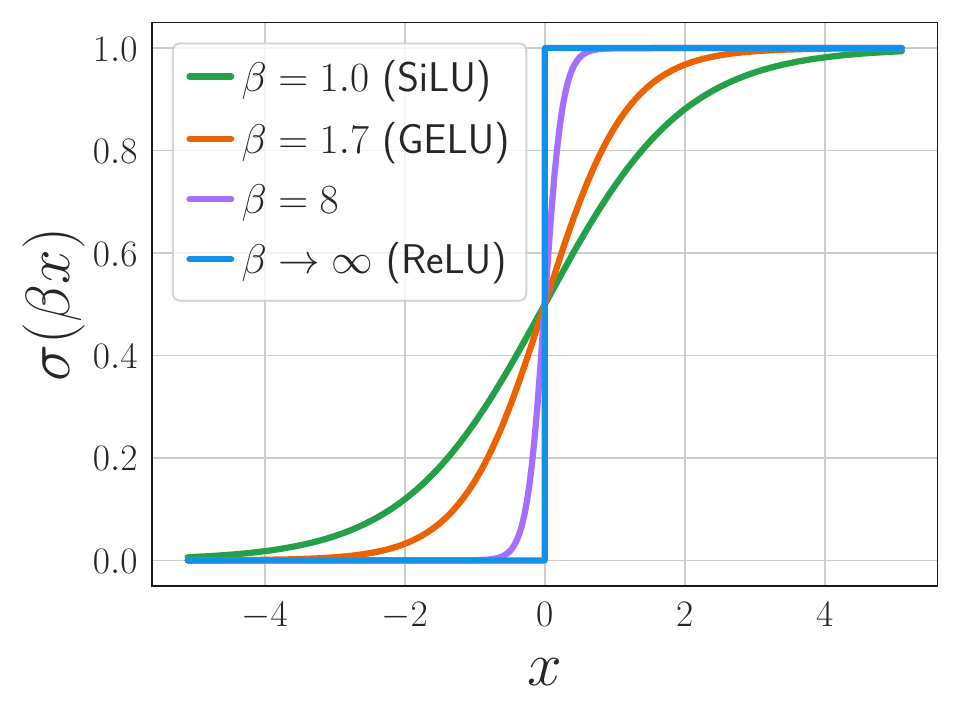}
  \caption{}
  \label{fig:scratch-comp-actfn-zoomed-out}
\end{subfigure}
\begin{subfigure}{.325\textwidth}
  \centering
  \includegraphics[width=\textwidth]{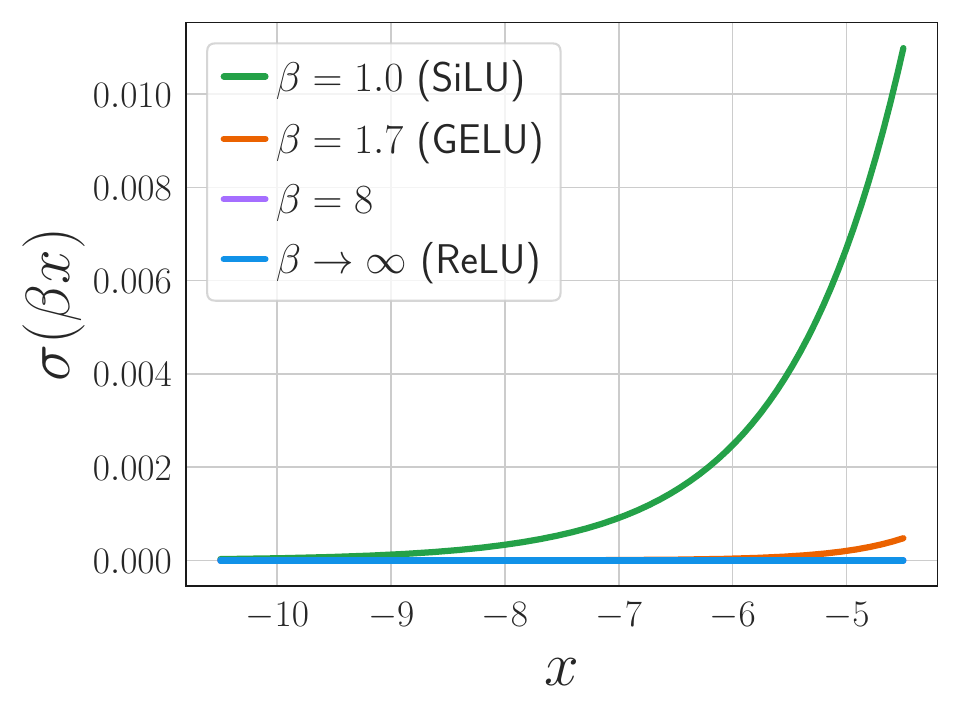}
    \caption{}
  \label{fig:scratch-comp-actfn-zoomed-in}
\end{subfigure}
\begin{subfigure}{.325\textwidth}
  \centering
  \includegraphics[width=\textwidth]{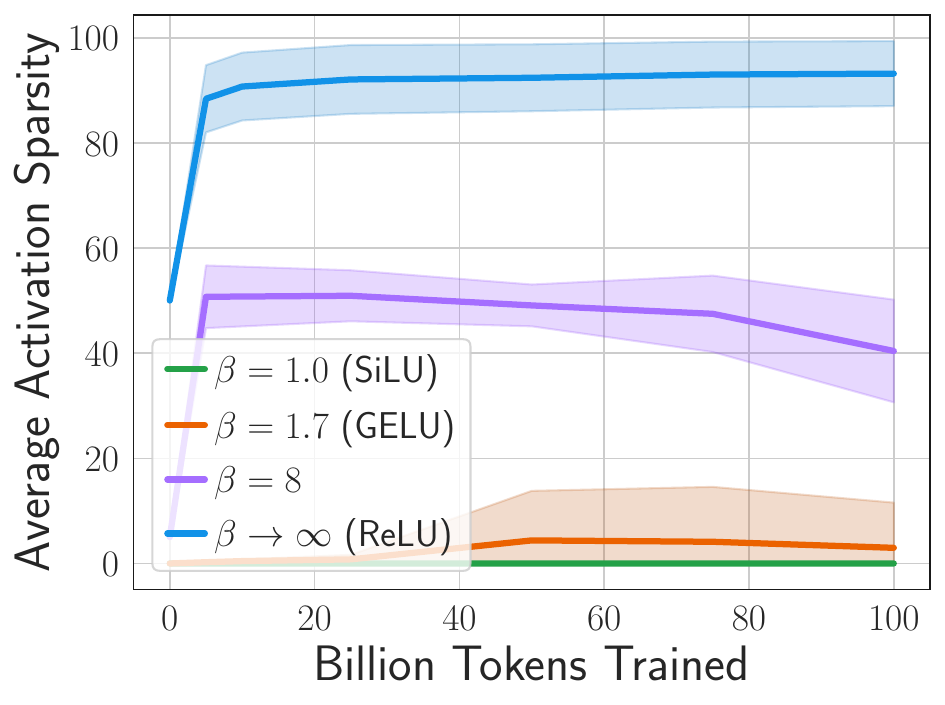}
    \caption{}
  \label{fig:scratch-comp-sparsity}
\end{subfigure}
\begin{subfigure}{.325\textwidth}
  \centering
  \includegraphics[width=\textwidth]{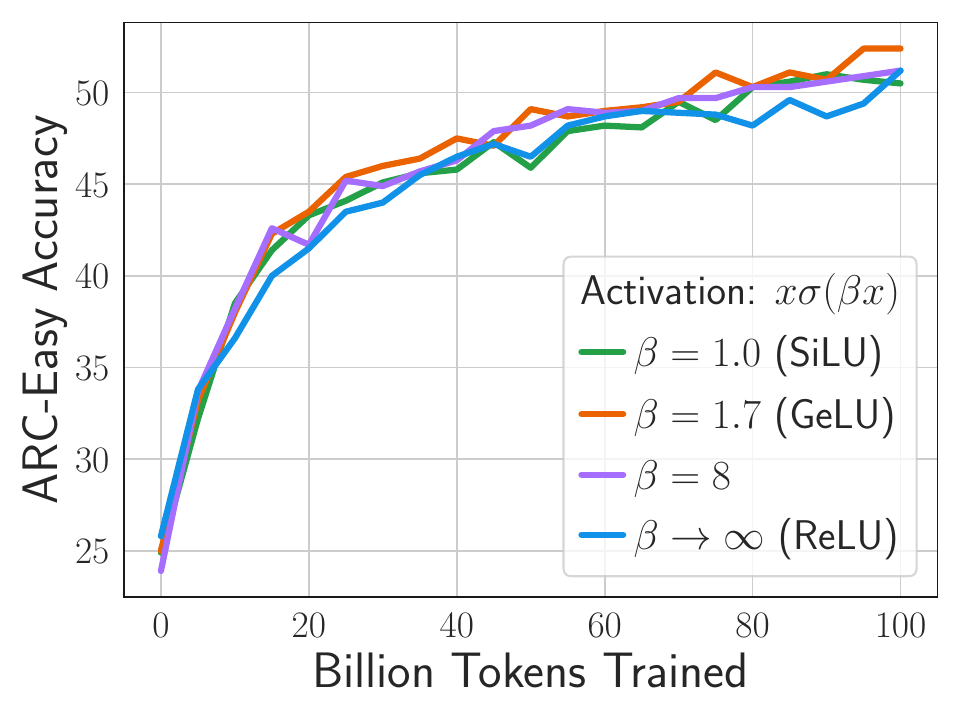}
\caption{}
  \label{fig:scratch-comp-arc}
\end{subfigure}
\begin{subfigure}{.325\textwidth}
  \centering
  \includegraphics[width=\textwidth]{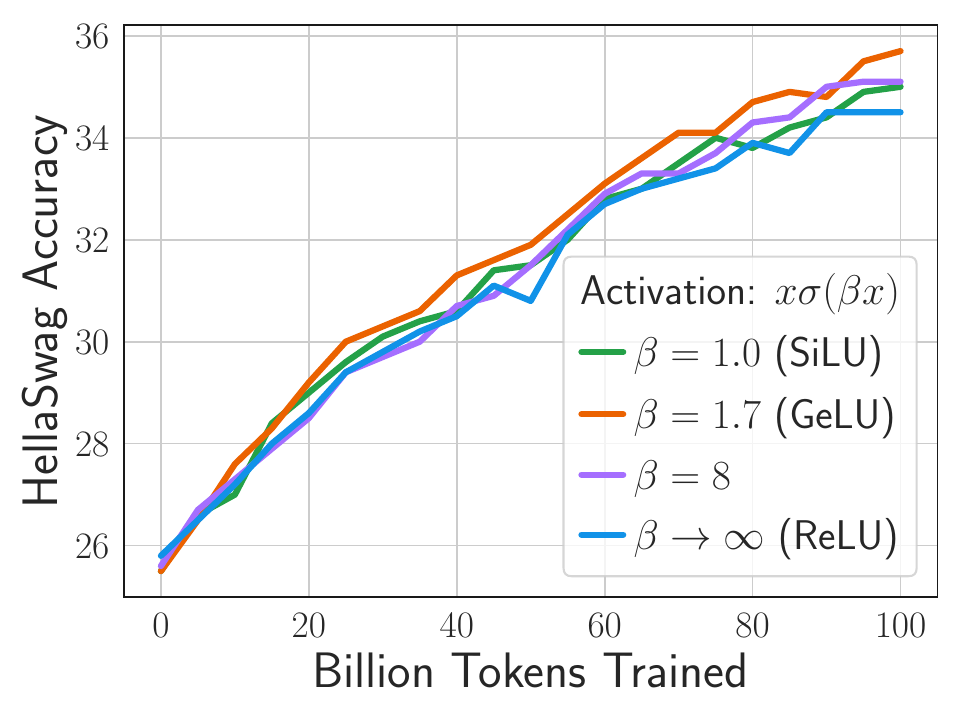}
\caption{}
  \label{fig:scratch-comp-hellaswag}
\end{subfigure}
\begin{subfigure}{.325\textwidth}
  \centering
  \includegraphics[width=\textwidth]{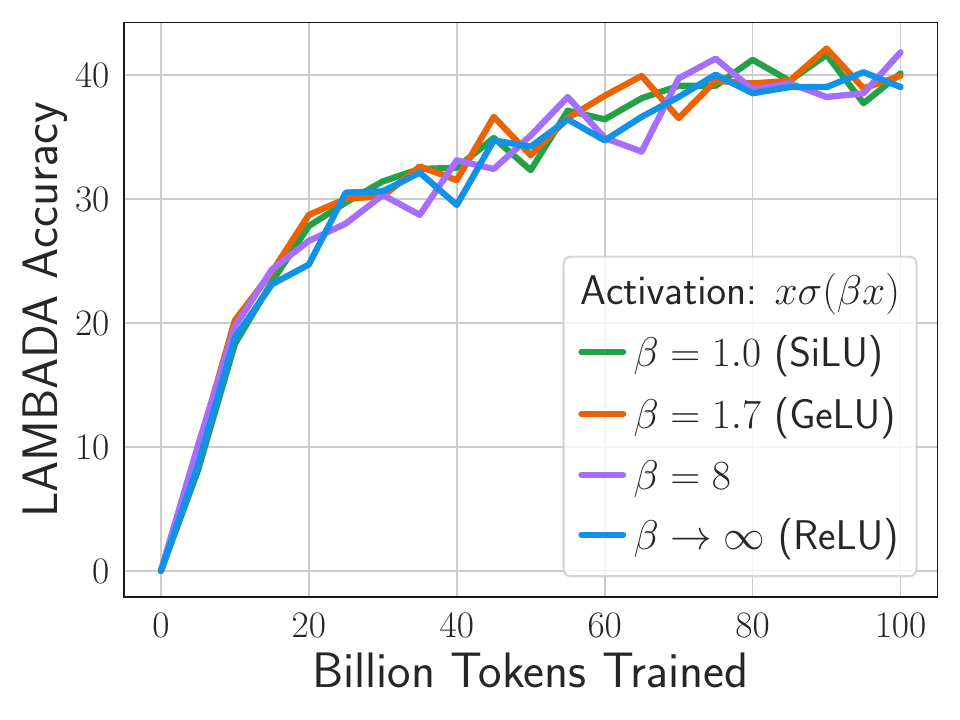}
\caption{}
  \label{fig:scratch-comp-lambada}
\end{subfigure}
\caption{ \textbf{(top)} (a) Shapes of different gating functions over [-5, 5]; (b) Continuation of (a) where SiLU is comparably larger compared to others; (c) Sparsity of the FFN with different activations: increasing $\beta$ increases sparsity. \textbf{(bottom)}
when trained from scratch, OPT 1.3 B models using different activation functions achieve similar performance.} 
\label{fig:scratch-comp-acts}
\end{figure}

\looseness=-1 As shown in the bottom row of \figref{fig:scratch-comp-acts}, the performance of the models is very similar when using different activation functions. This is consistent with the scaling laws literature (\cite{ScalingLawOpenAI,ScalingLawChinchilla}), which suggests that the performance of sufficiently large models trained on sufficiently large data depends heavily on compute and data, not architectural details.

While the performance levels of the different activations are similar, their activation sparsity levels differ. Here, we define sparsity as the average sparsity level across all layers for each model. As shown in \figref{fig:scratch-comp-sparsity}, as we transition from \silu to ReLU (increasing $\beta$), the sparsity also increases. This results from the different gating thresholds, as ReLU drops significantly more values compared to GELU and SiLU (see \figref{fig:scratch-comp-actfn-zoomed-in}). In Appendix \ref{sec:appendix-preact-dist-opt}, we illustrate the evolution of the pre-activation distribution throughout training.

\looseness=-1 Overall, the results support our initial claim: non-ReLU activations result in a negligible performance gain (if any) but a substantial loss in sparsity and efficiency. While, at times, the performance of GeLU or SiLU might be slightly higher, ReLU can match it with slightly longer training. We acknowledge that to compensate for the small gap in performance, we need to pay the one-time cost of longer training. However, in return, we get a significantly more sparsity.

\begin{figure}[t]
\centering
\includegraphics[width=0.88\textwidth]{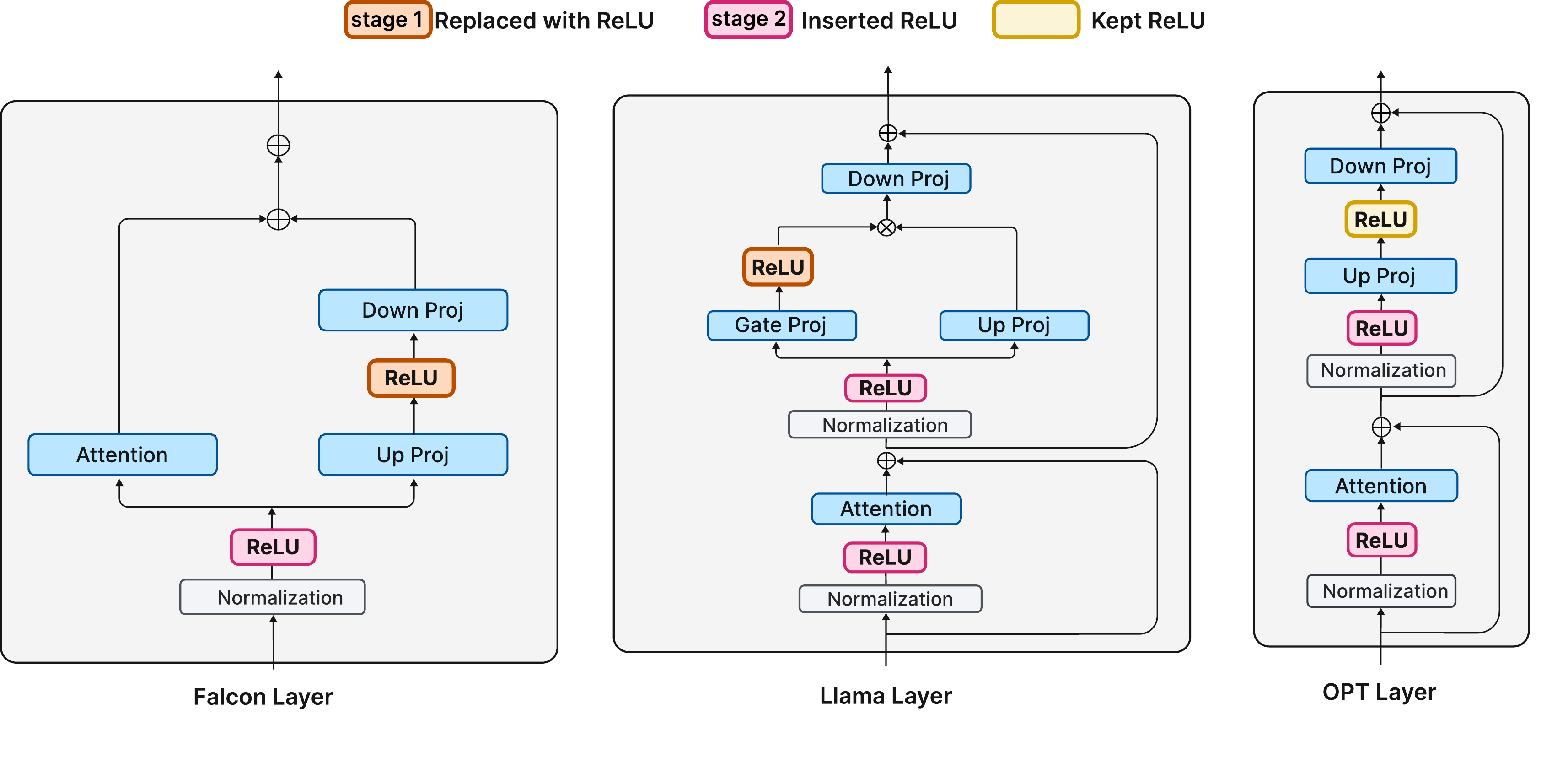}
\caption{Architectural surgeries for \emph{relufication}. In stage 1 we keep the existing ReLUs (in the case of OPT) or replace the activation function between up projection and down projections from GELU (Falcon) and SiLU (Llama) to ReLU. In stage 2, we insert new ReLUs after normalization layers. }
\label{fig:relufication-architecture-change}
\end{figure}

\section{Relufication}
\label{sec:relufication}
While in the previous section, we have seen that the performance does not depend on the activation function, we note that most of the available pretrained LLMs are trained with activation functions other than ReLU. Hence, to incorporate the computational benefits of ReLU activations at inference time, we perform various architectural surgeries and study the consequences of such changes.

We present our findings about incorporating ReLU activations into the pretrained LLMs, a process we refer to as \emph{relufication}. More specifically, we show that replacing the activation functions of pretrained LLMs with ReLU is possible, and the performance can be recovered very rapidly during finetuning. 
Moreover, we show that we can exploit the sparse ReLU activations, and by inserting additional ReLU layers after normalization layers, we can improve inference efficiency, as FLOPS indicates.
Finally, we show these modifications, which are easy to implement, lead to lighter models at inference time while maintaining comparable performance to the original pretrained models.

\subsection{Stage 1: replacing non-ReLU activations}
\label{sec:relu_stage_1}

The process of relufication for different pretrained architectures is shown in \figref{fig:relufication-architecture-change}. This process can be done in multiple stages, as we describe here.
The first and more intuitive stage replaces non-ReLU activations with ReLU in the FFN layer. For the Falcon and Llama models, this means replacing GELU and \silu, respectively. We note that since OPT models already use ReLU activations, we keep those unchanged.  After finetuning on 30 billion tokens of the RefinedWeb, \figref{fig:relufication-1x} shows that the modified models have significantly more sparsity in their activations.

\begin{figure}[t]
\centering
\begin{subfigure}{.245\textwidth}
  \centering
  \includegraphics[width=\textwidth]{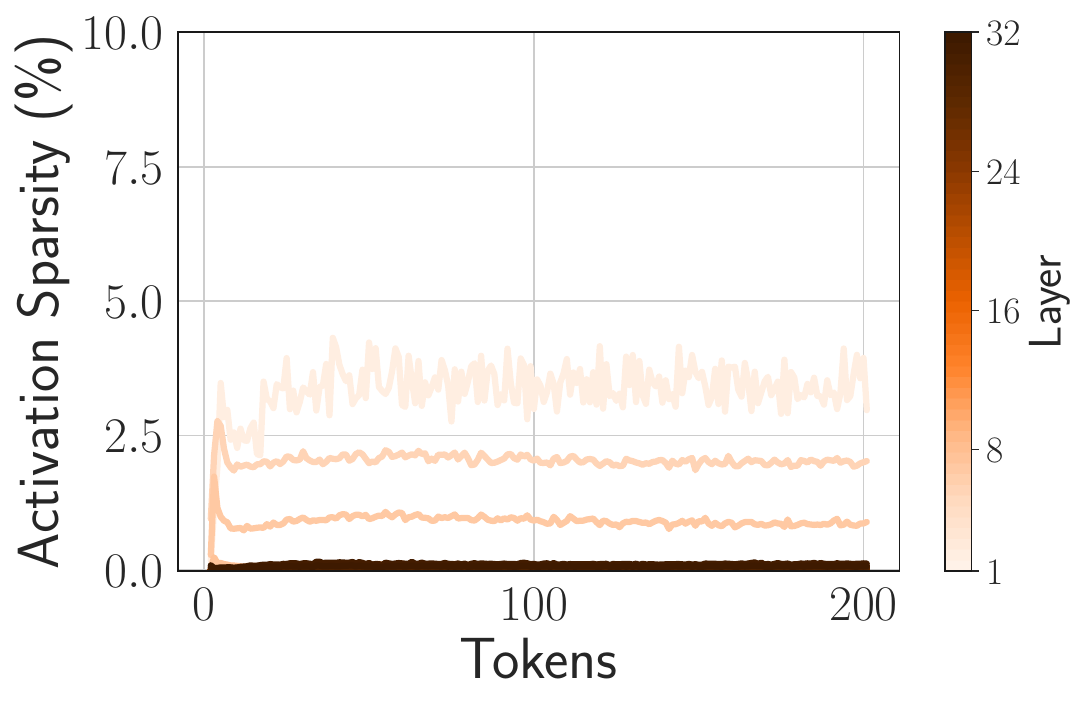}
\caption{Falcon 7B (GELU)}
\label{fig:isolated-falcon}
\end{subfigure}
\begin{subfigure}{.245\textwidth}
  \centering
  \includegraphics[width=\textwidth]{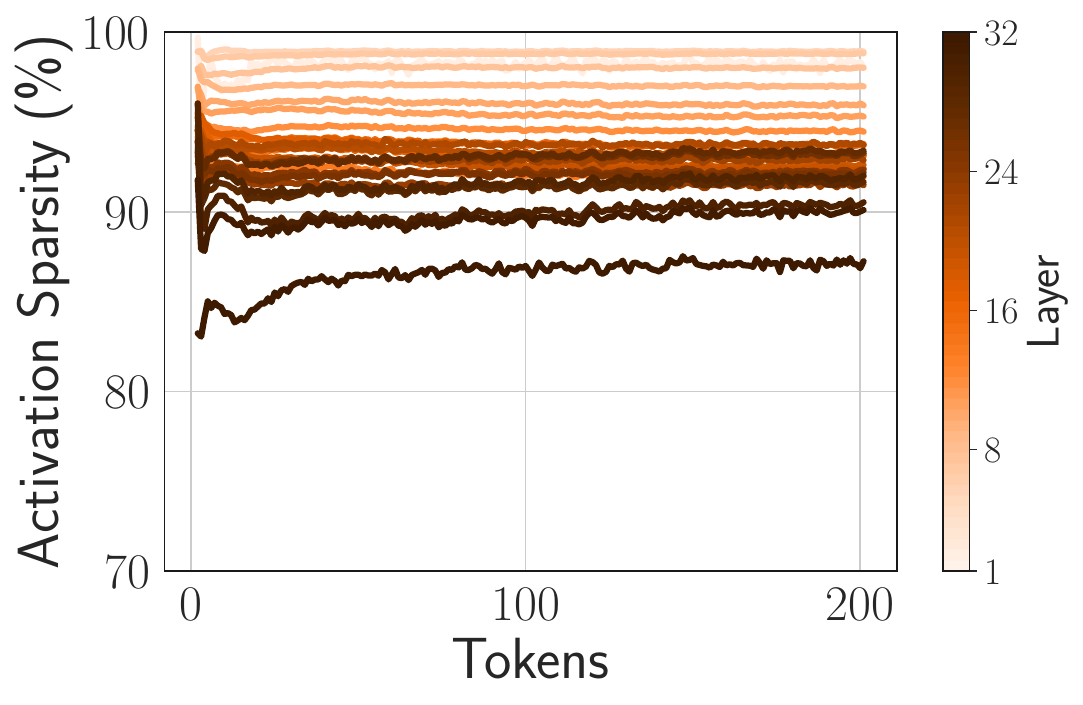}
\caption{Falcon 7B (ReLU)}
\label{fig:isolated-falcon-1x}
\end{subfigure}\hfill
\begin{subfigure}{.245\textwidth}
  \centering
  \includegraphics[width=\textwidth]{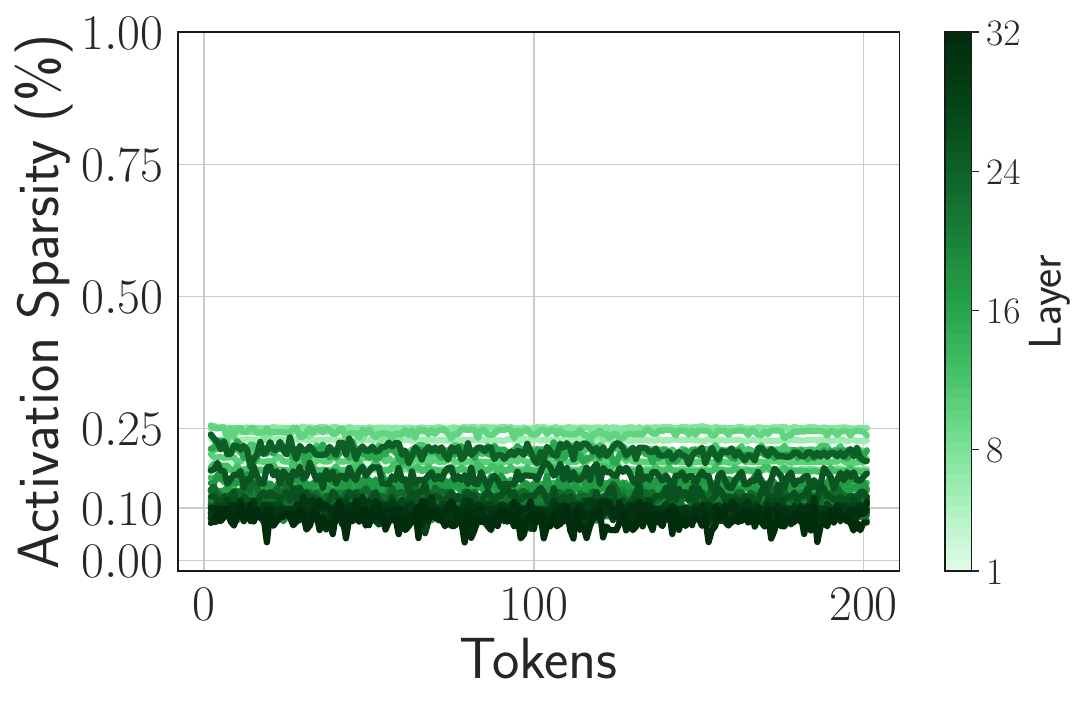}
  \caption{Llama 7B (SiLU)}
  \label{fig:isolated-llama}
\end{subfigure}
\begin{subfigure}{.245\textwidth}
  \centering
  \includegraphics[width=\textwidth]{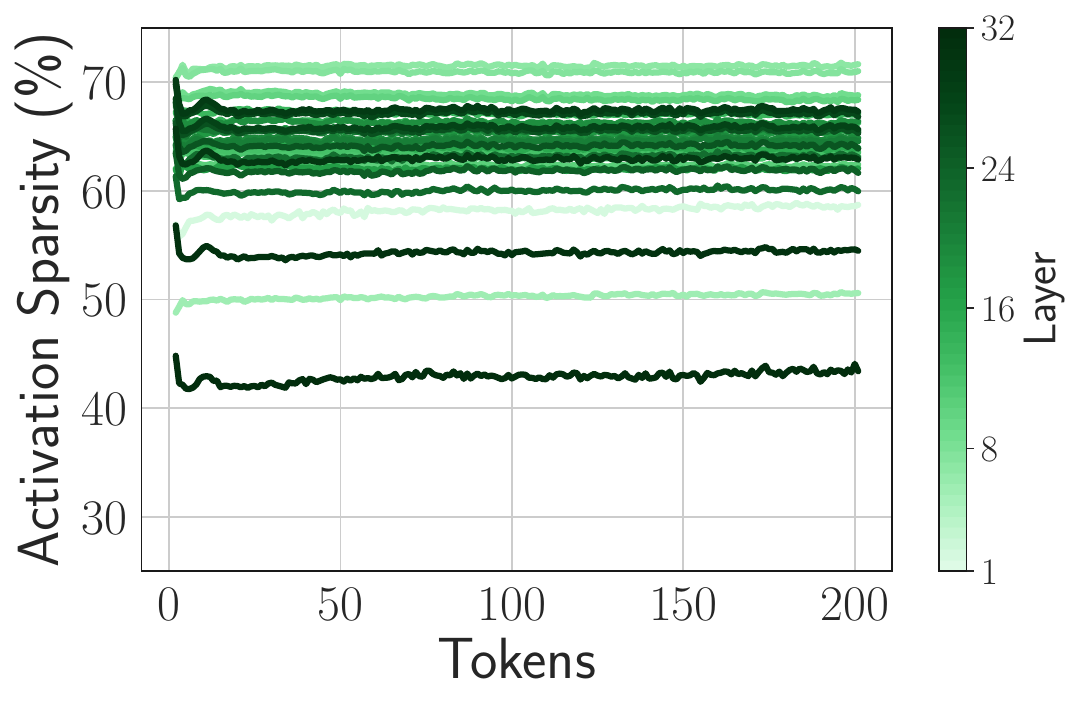}
  \caption{Llama 7B (ReLU)}
  \label{fig:isolated-llama-1x}
\end{subfigure}
\caption{ Activation sparsity of Falcon and Llama models improves significantly after \emph{relufication}.}
\label{fig:relufication-1x}
\vspace{-2mm}
\end{figure}

\begin{figure}[t]
\centering
\begin{subfigure}{.245\textwidth}
  \centering
  \includegraphics[width=\textwidth]{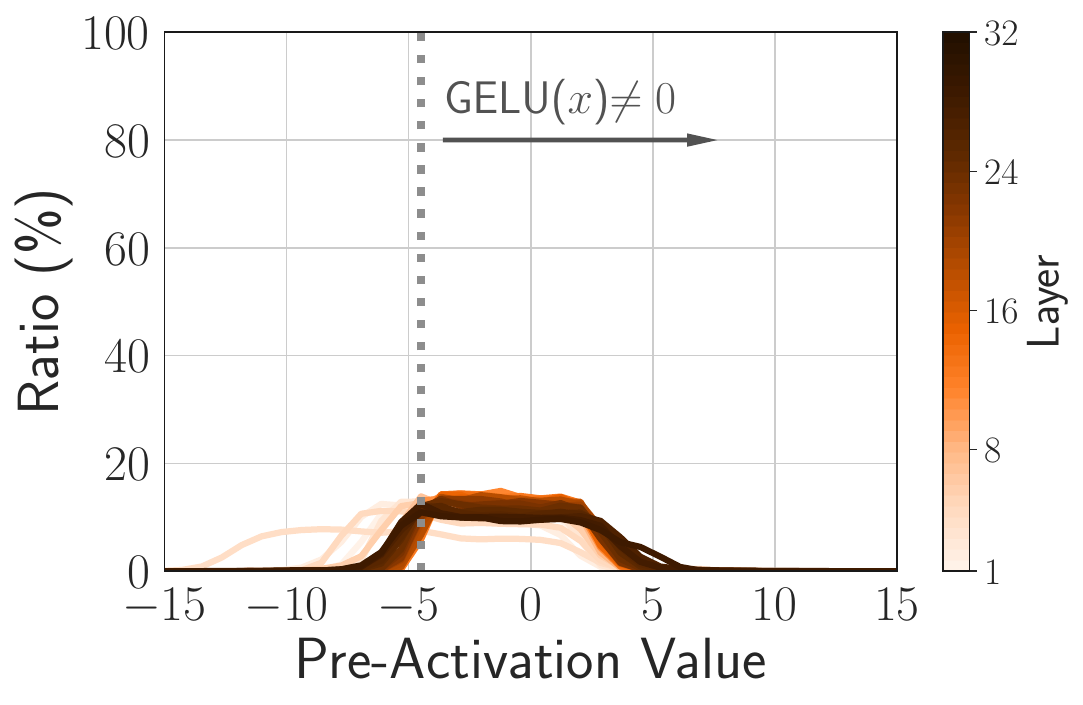}
  \caption{Falcon 7B (GELU)}
  \label{fig:preact-falcon}
\end{subfigure}\hfill
\begin{subfigure}{.245\textwidth}
  \centering
  \includegraphics[width=\textwidth]{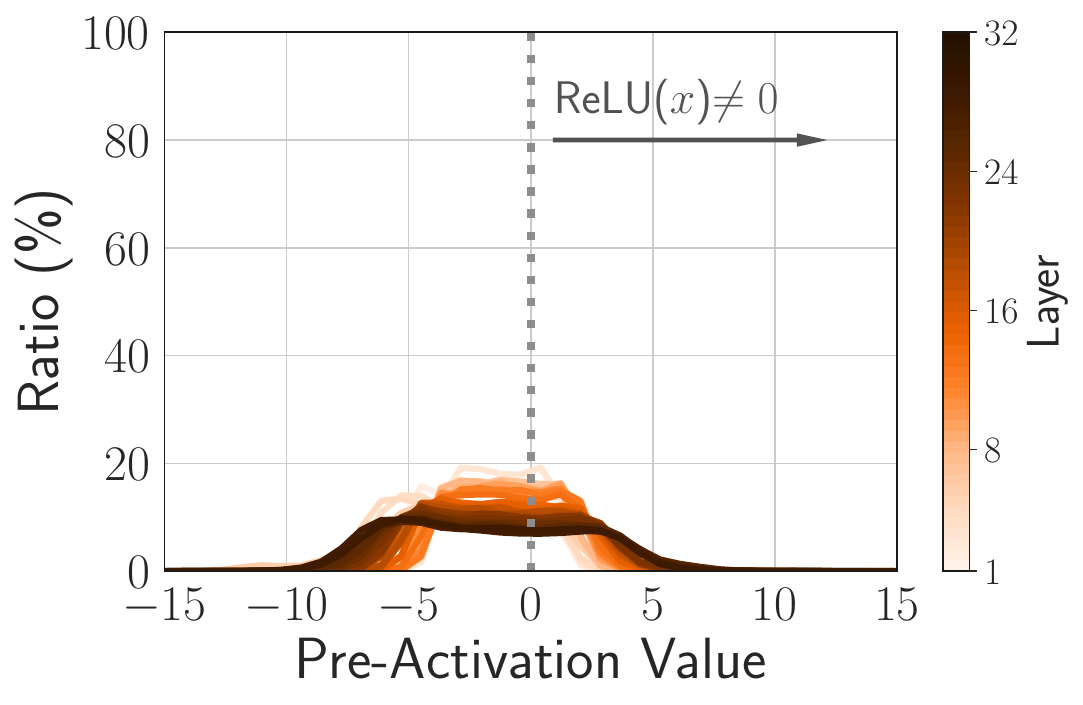}
  \caption{Falcon 7B (ReLU)}
  \label{fig:preact-falcon-1x}
\end{subfigure}\hfill
\begin{subfigure}{.245\textwidth}
  \centering
  \includegraphics[width=\textwidth]{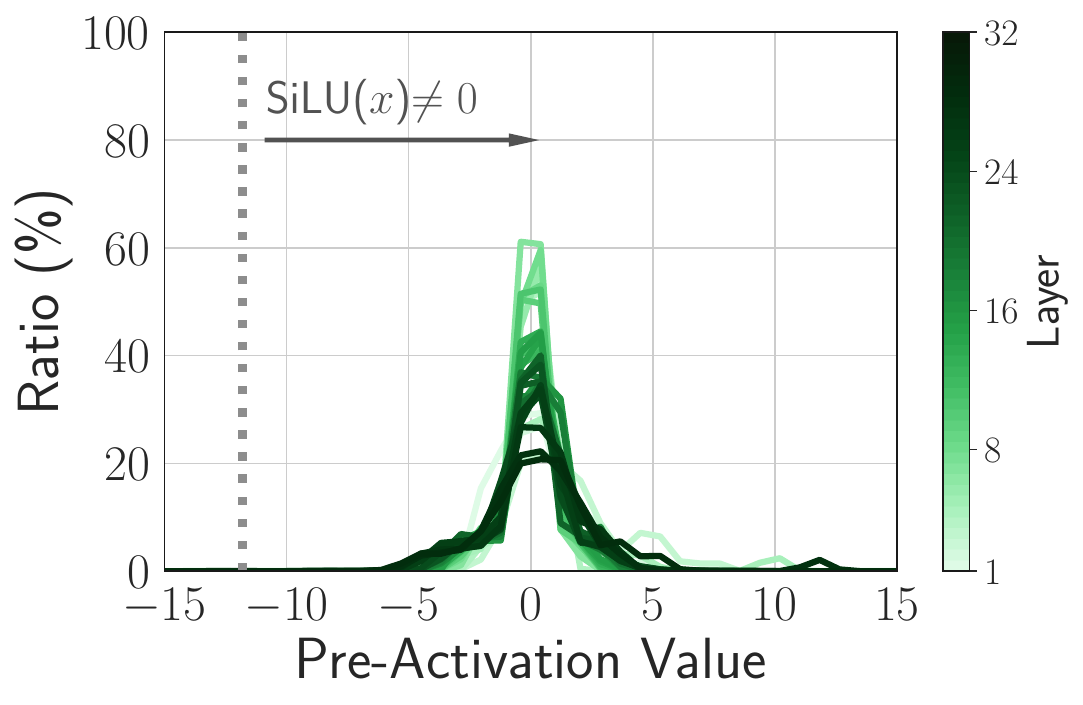}
  \caption{Llama 7B (SiLU)}
  \label{fig:preact-llama}
\end{subfigure}
\begin{subfigure}{.245\textwidth}
  \centering
  \includegraphics[width=\textwidth]{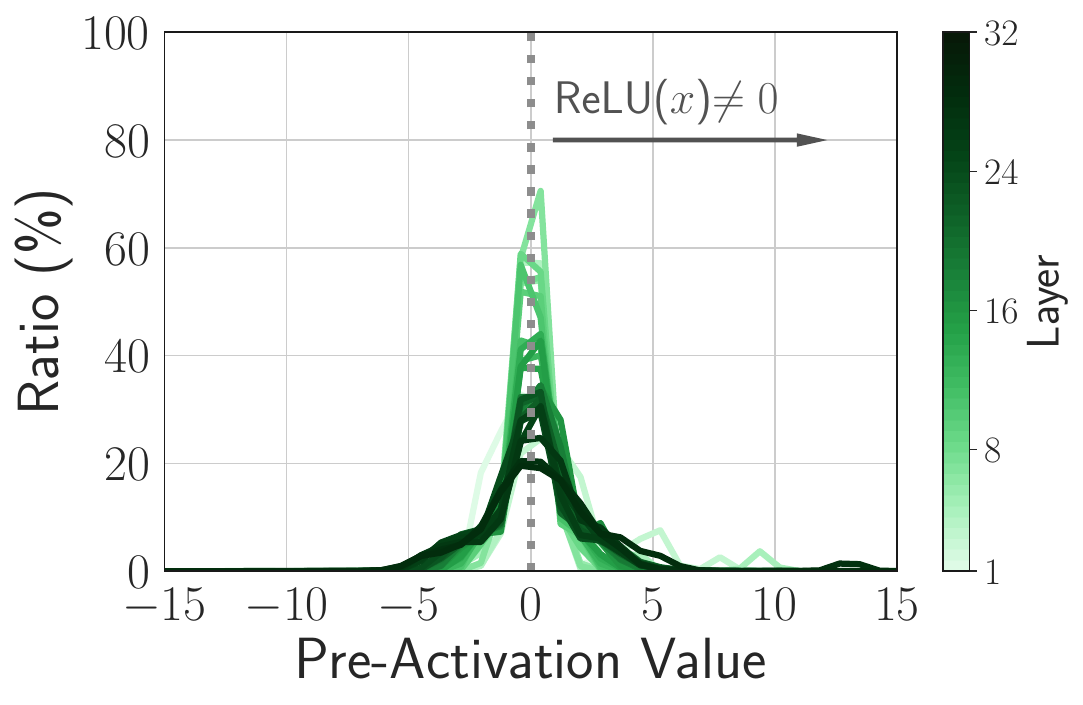}
  \caption{Llama 7B (ReLU)}
  \label{fig:preact-llama-1x}
\end{subfigure}
\caption{The preactivation distribution of pretrained models for Falcon and Llama does not change significantly during the short finetuning stage of relufication. The dashed line shows the cutoff point before which the output is almost zero.}
\label{fig:relufication_preact_distribuiton}
\negspace{-6mm}
\end{figure}

 \begin{wrapfigure}{R}{0.4\textwidth}
  \begin{center}
    \includegraphics[width=0.4\textwidth]{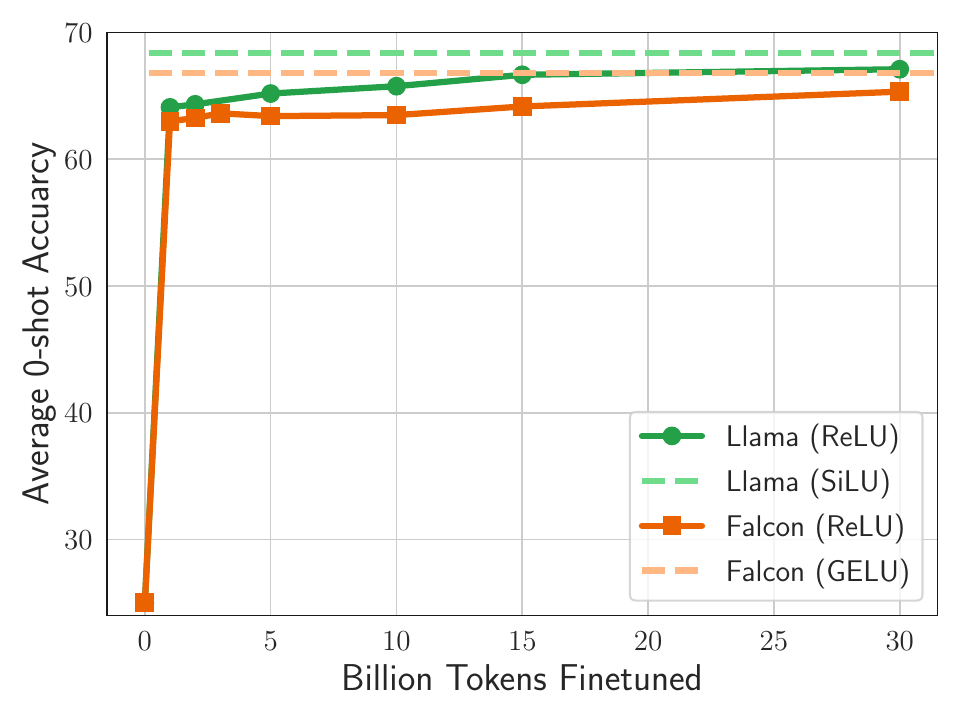}
  \end{center}
  \caption{Evolution of zero-shot accuracy during finetuning: The model quickly recovers most of its lost performance due to the architecture surgery.}
  \label{fig:relufication-evolution-acc}
\end{wrapfigure}
 In addition to the drastic improvement in activation sparsity, we can make several notable observations. First, while the shape of preactivation depends on the pretraining dynamics and architecture, in \figref{fig:relufication_preact_distribuiton}, we show that it does not change significantly during the relatively short finetuning stage. As a result, we can predict the activation sparsity before finetuning, knowing it will not change significantly. Later in \secref{sec:shifted-relu} we build on this observation and propose shifting the preactivation values before applying ReLU and further increasing the activation sparsity. The stability of the preactivation distribution may suggest that the behavior of the network does not change while creating sparse representations. Indeed, we show that after replacing the activation function with ReLU, finetuned models quickly recover their performance in~\figref{fig:relufication-evolution-acc}. We believe optimizing this process even further (e.g., using better finetuning data) is an exciting follow-up direction.

\subsection{Stage 2: Pushing for more sparsity}
\label{sec:relu_stage_2}
In the previous stage, we replaced non-ReLU activations to gain more sparsity. This leads to the input of \emph{down projection} layer being sparse, roughly $30\%$ of the total computation. However, there are other matrix-vector multiplications in the decoder layer of transformers besides the down projection. For instance, before the \emph{up projection} and \emph{gate projections} of FFN layer, and \emph{QKV projections} in the attention layer (see \figref{fig:relufication-architecture-change}). Together, the mentioned matrix-vector multiplications consume about $55\%$ of the total computation.

To this end, we utilize the fact that in modern transformer layers, the input to both the attention and FFN layers come from a normalization layer, e.g., LayerNorm~\cite{ba2016layer} or RMSNorm~\cite{zhang2019root}. These layers can be viewed as a specific form of MLP, where, instead of applying arbitrary learnable parameters, they learn to scale inputs. Therefore, we apply ReLU to obtain sparse activations after normalization layers which we call the \emph{second stage} of relufication in \figref{fig:relufication-architecture-change}.

\begin{table}[t]
\centering
\caption{Comparing zero-shot performance across several tasks: After \emph{relufication}, the activation sparsity of models increases significantly, hence increased efficiency measured by FLOPS. Within each group, the performance levels are comparable.}
\label{tab:relufy-results-table}
\resizebox{\textwidth}{!}{%
\begin{tabular}{@{}l|ccc|c|cccccccccc@{}}
\toprule
\multirow{2}{*}{\textbf{Model (stage)}} & \multicolumn{3}{c|}{\textbf{Input Sparsity (\%)}}  & \multirow{2}{*}{\textbf{\begin{tabular}[c]{@{}c@{}}FLOPS\\ (G)\end{tabular}}} & \multicolumn{10}{c}{\textbf{Zero-Shot Accuracy (\%)}}                                                                                                                                                  \\ \cmidrule(lr){2-4} \cmidrule(l){6-15} 
                                & \textbf{QKV} & \textbf{DownProj} & \textbf{UpProj} &                                                                               & \multicolumn{1}{c|}{\textbf{Avg}} & \textbf{Arc-E} & \textbf{Arc-C} & \textbf{Hellaswag} & \textbf{BoolQ} & \textbf{PIQA} & \textbf{LAMBADA} & \textbf{TriviaQA} & \textbf{WinoGrande} & \textbf{SciQ} \\ \midrule
OPT 1.3B                        & 0            & 96                & 0               & 1.3                                                                           & \multicolumn{1}{c|}{50.7}         & 57.3           & 22.9           & 41.3               & 57.0           & 71.8          & 56.0             & 6.1               & 58.9                & 84.6          \\
OPT 2.7B (s2)                   & 50           & 96                & 35              & 1.1                                                                           & \multicolumn{1}{c|}{53.1}         & 60.3           & 26.8           & 44.9               & 55.4           & 73.9          & 57.6             & 12.4              & 59.6                & 86.7          \\
OPT 2.7B                        & 0            & 96                & 0               & 1.8                                                                           & \multicolumn{1}{c|}{54.5}         & 63.3           & 29.2           & 45.8               & 57.6           & 74.2          & 61.4             & 12.3              & 60.8                & 85.9          \\
OPT 6.7B (s2)                   & 50           & 97                & 40              & 2.8                                                                           & \multicolumn{1}{c|}{58.6}         & 66.5           & 32.2           & 49.1               & 63.0           & 76.4          & 63.3             & 23.8              & 63.1                & 90.3          \\
OPT 6.7B                        & 0            & 97                & 0               & 4.5                                                                           & \multicolumn{1}{c|}{59.8}         & 68.0           & 32.4           & 50.2               & 68.4           & 75.8          & 67.2             & 20.9              & 65.3                & 90.2          \\
   \midrule
Falcon 7B (s2)                  & 56           & 95                & 56              & 2.2                                                                           & \multicolumn{1}{c|}{64.8}         & 73.6           & 38.6           & 55.3               & 68.4           & 78.9          & 67.6             & 40.4              & 67.1                & 93.4          \\
Falcon 7B (s1)                  & 0            & 94                & 0               & 4.1                                                                           & \multicolumn{1}{c|}{65.2}         & 72.2           & 39.1           & 55.4               & 70.6           & 78.4          & 69.2             & 40.5              & 67.5                & 93.1          \\
Falcon 7B                       & 0            & 1                 & 0               & 6.6                                                                           & \multicolumn{1}{c|}{66.8}         & 74.6           & 40.2           & 57.7               & 73.5           & 79.4          & 74.5             & 40.4              & 67.2                & 94.0          \\
   \midrule
Llama 7B (s2)                   & 51           & 65                & 67              & 2.9                                                                           & \multicolumn{1}{c|}{66.4}         & 73.8           & 39.6           & 54.8               & 69.9           & 77.9          & 70.7             & 48.5              & 68.6                & 93.8          \\
Llama 7B (s1)                   & 0            & 62                & 0               & 4.8                                                                           & \multicolumn{1}{c|}{67.1}         & 75.2           & 40.1           & 55.2               & 73.4           & 77.7          & 71.5             & 49.6              & 67.1                & 94.2          \\
Llama 7B                        & 0            & 0                 & 0               & 6.6                                                                           & \multicolumn{1}{c|}{68.4}         & 75.5           & 42.1           & 69.9               & 74.8           & 78.7          & 73.1             & 49.9              & 69.8                & 95.4          \\ \bottomrule
\end{tabular}%
}
\end{table}
\begin{table}[t]
\centering
\caption{MMLU five-shot accuracy. Models finetuned with different activation functions have similar performance.\small{* Denotes we replace the SiLU function in Llama's SwiGLU activation function with ReLU.} }
\label{tab:mmlu-5shot}
\resizebox{0.8\textwidth}{!}{%
\begin{tabular}{@{}cccc|cccc@{}}
\toprule
Model     & Activation & FLOPS(\%) & Avg & Humanities & STEM & Social Sciences & Other \\ \midrule
Falcon 7B & SiLU       & 100       & 26.4    & 24.8       & 27.4 & 27.2            & 26.2  \\
Falcon 7B & GELU       & 100       & 27.7    & 28.1       & 26.0   & 28.0              & 29.4  \\
Falcon 7B & ReLU       & 62        & 27.9    & 26.0         & 26.5 & 31.8            & 27.9  \\ \midrule
Llama 7B  & SiLU*       & 100       & 35.1    & 37.9       & 30.2 & 37              & 37.1  \\
Llama 7B  & GELU       & 100       & 35.9    & 38.4       & 29.4 & 37.6            & 39.5  \\
Llama 7B  & ReLU       & 72        & 34.7    & 34.8       & 31.2 & 36.3            & 37.8  \\ \bottomrule
\end{tabular}%
}
\end{table}

\tabref{tab:relufy-results-table} shows that different stages of the relufication process do not significantly reduce zero-shot accuracy while using significantly less compute. The sparsity is broken down into three categories: up, down, and QKV projections. Notably, the input to QKV is less sparse than FFN projections, which opens an interesting avenue for future research.
We note that the small gap in performance between the original vs. relufied models may be partially due to the finetuning process and not necessarily the activation function. Our finetuning is applied only for 30B and 50B tokens for stages 1 and 2, respectively. Putting into prospect and comparing it with 1T tokens of Llama, for example, this is equivalent to 3-5\% of the original training duration. As discussed in \secref{sec:from-scratch-comparison}, according to the scaling properties of LLMs, the gap will be further bridged by additional finetuning steps. 

We also assess the in-context learning ability of the relufied models with the Massive Multitask Language Understanding (MMLU) \cite{hendrycks2021measuring} benchmark in \tabref{tab:mmlu-5shot}.
Our results show that when we augment the original LLMs with different activations and finetune, the few-shot performance does not change significantly either. Moreover, ~\secref{appendix:scaling-opt} in the appendix shows that a larger but relufied model performs better than an original smaller model of the same FLOPS.
Overall, the results affirm that the proposed relufication procedure can decrease the inference FLOPS at various stages and rates while maintaining on-par performance on various tasks.

\section{Applications}

In this section, we discuss promising directions motivated by our investigation in \secref{sec:relufication}. First, we introduce \emph{aggregated sparsity}, showing that ReLU networks reuse previously activated neurons when generating tokens. Hence, we can leverage this to increase the generation speed. Next, we relate aggregated sparsity with speculative decoding to further improve speculative decoding's inference time.
Finally, we briefly discuss a promising direction of using the \emph{shifted ReLU} activation function to improve the sparsity further.

\subsection{Aggregated Sparsity: reusing previously activated neurons}
\label{sec:aggregated-sparsity}
A consequence of using only a small subset of neurons for each token is that if these neurons are shared to some degree, the model still does not use all of the neurons until many tokens are processed. We refer to this as \emph{aggregated sparsity}, which we defined as the ratio of neurons that have not been used up to processing the first $t$ token. Note that this metric will always be non-increasing. Intuitively, it measures the unused capacity of feed-forward neurons for processing a specific prompt.

\looseness=-1 Here in \figref{fig:reuse-acts} we show that for the OPT-6.7B model, on average, about 50\% of all the neurons will be unused across the first 150 tokens of prompts coming from the WikiText dataset. Our empirical results hold for other ReLU models and other datasets. Additionally, in \figref{fig:reuse-vs-random}, we show that this pattern is far from random activation of neurons during the token generation with a rate equal to the average rate of activation usage per token. Let $s_i$ be the activation sparsity of layer $i$ averaged over all tokens. Then, the probability of an activation not used in generating the first $t$ tokens in uniformly random selection is $s_i^t$ . \figref{fig:reuse-vs-random} shows this quantity for two layers $i=8, 24$ for the first $256$ tokens in dashed line. It also shows the real (observed) number of activations being used in the solid line. The fact that the random aggregated sparsity (referred to as random sparsity) is lower than the observed aggregated sparsity (we refer to it as aggregated sparsity) shows a clear pattern of reusing activations.

We can benefit from the overlapping activations by utilizing previously loaded weights from the down projection layer for upcoming tokens. To test this, we initiate with reading 128 tokens. For the subsequent 128 tokens, we intermittently avoid loading new weights for every $\gamma$ token. Using $\gamma=16$ as an example, tokens 129-145 are generated conventionally. However, for tokens 146-161, we retain the existing weight without introducing any new weight. This pattern continues, with every next set of $\gamma$ tokens alternating between conventional generation and weight reuse. In \figref{fig:reuse-perplexity}, we observe only a slight increase in perplexity when using this approximation to address the memory and I/O-intensive nature of LLM inference. This figure contrasts the perplexity obtained from reused activations and random selections. The reuse strategy aligns well with the baseline, whereas random selection notably increases perplexity, highlighting the effectiveness of reusing the already loaded activations for subsequent tokens.

\begin{figure}[t]
\centering
\begin{subfigure}{.285\textwidth}
  \centering
  \includegraphics[width=\textwidth]{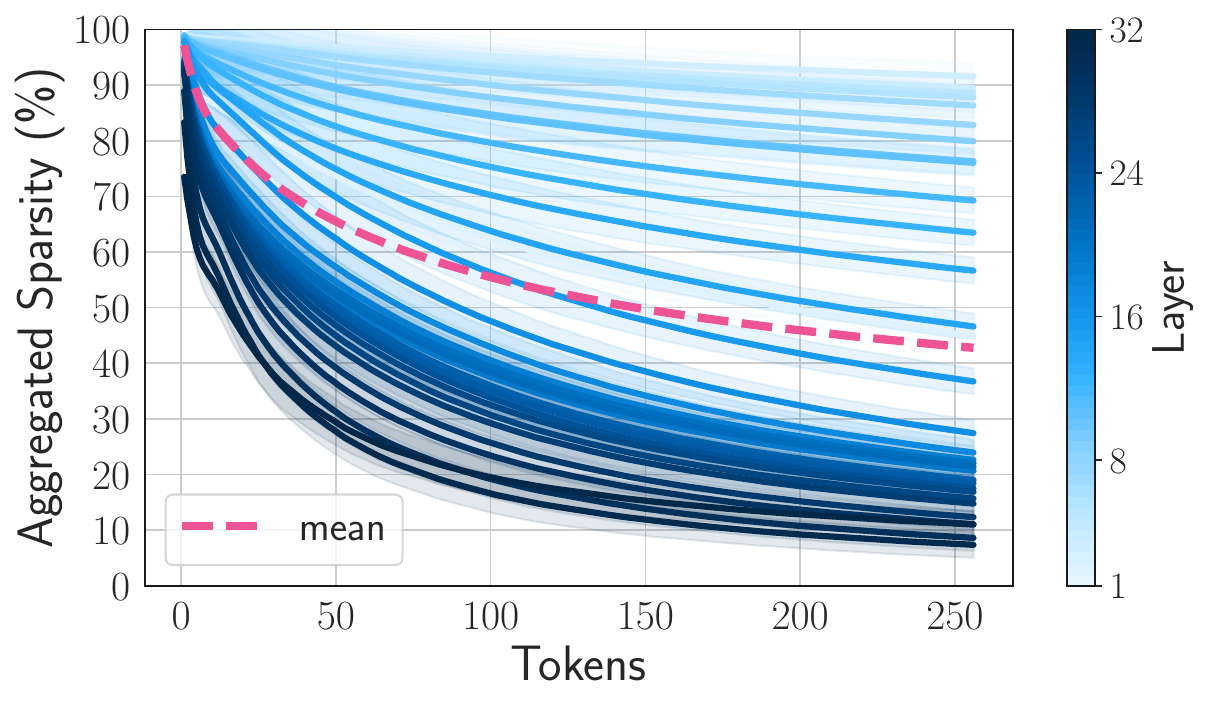}
  \caption{}
  \label{fig:reuse-acts}
\end{subfigure}
\begin{subfigure}{.225\textwidth}
  \centering
  \includegraphics[width=\textwidth]{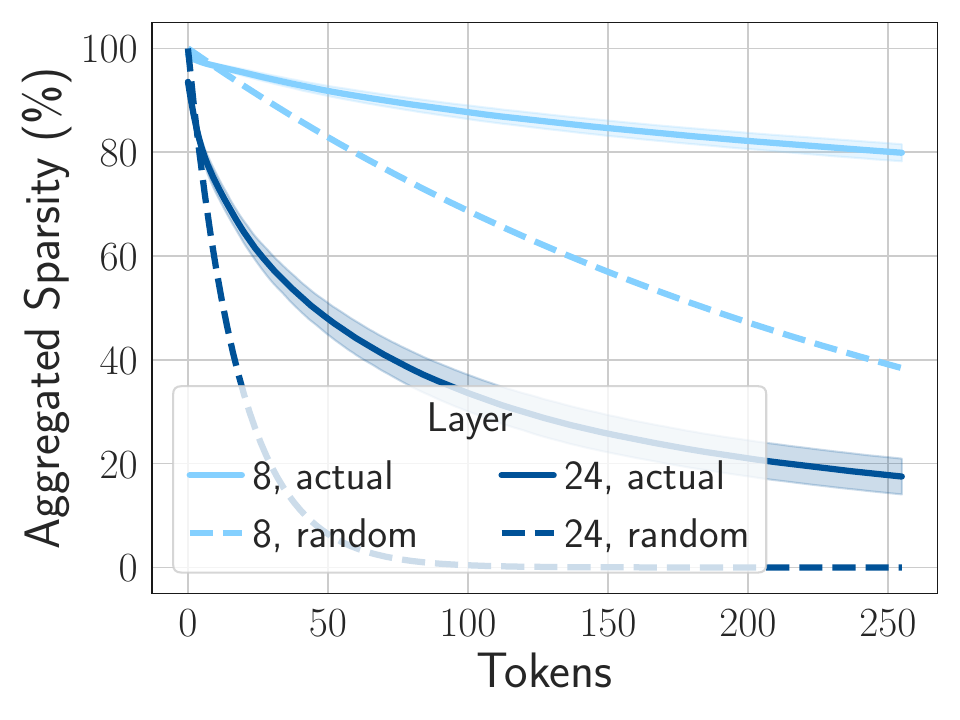}
   \caption{}
  \label{fig:reuse-vs-random}
\end{subfigure}
\begin{subfigure}{.225\textwidth}
  \centering
  \includegraphics[width=\textwidth]{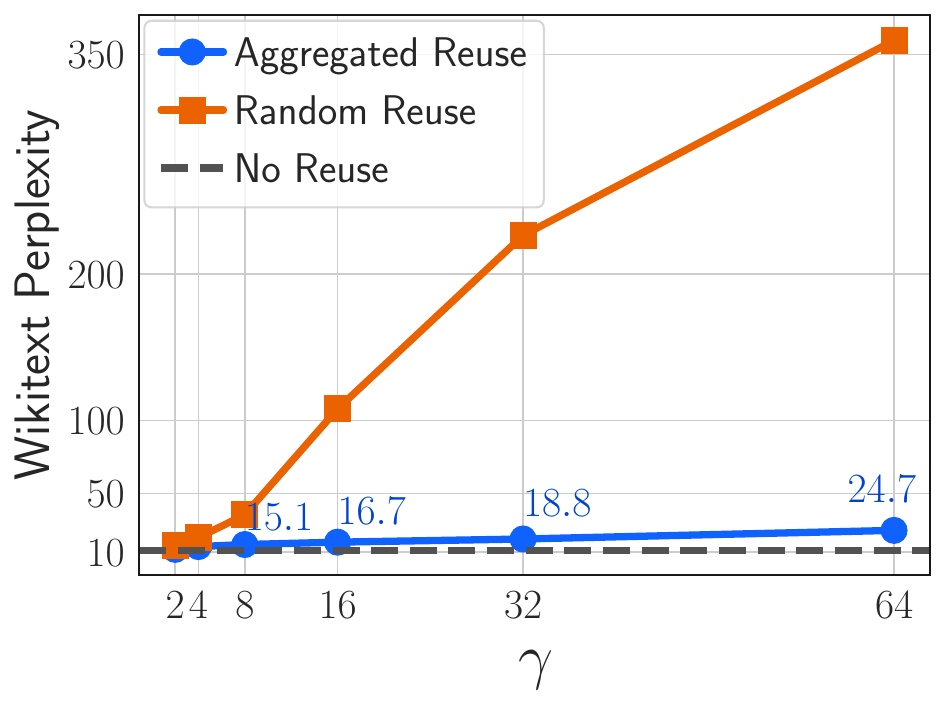}
   \caption{}
   \label{fig:reuse-perplexity}
\end{subfigure}
\begin{subfigure}{.23\textwidth}
  \centering
  \includegraphics[width=\textwidth]{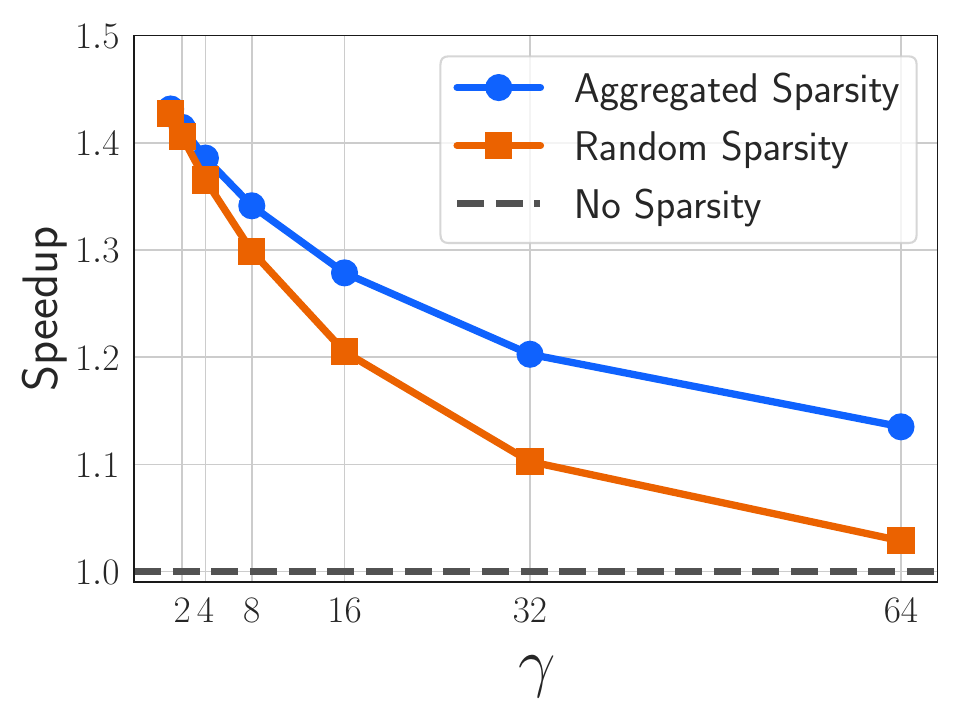}
   \caption{}
   \label{fig:speedup-over-speculative}
\end{subfigure}
\caption{\textbf{(a)} Aggregated sparsity of different layers and their mean. \textbf{(b)} Aggregated sparsity during token generation and comparison with a random sparsity. \textbf{(c)} Perplexity, based on the number of tokens for which loaded weights from previous tokens are reused. The dashed line represents no reuse, the solid blue line shows the case with activation reuse according to aggregated sparsity, and the orange line depicts the perplexity when activations are reused according to a random sparsity. \textbf{(d)} The inference speedup of speculative decoding with aggregated sparsity and with random sparsity. Speedup equal to 1.0 is the standard version of speculative decoding.} 
\label{fig:activation-reuse}
\end{figure}

\subsection{Activation sparsity and speculative decoding}
\label{sec:speculative-decoding}
As highlighted in \secref{sec:aggregated-sparsity}, activation reuse happens for multiple consecutive tokens. When multiple consecutive tokens are processed together, we can save the I/O (i.e., transferring weights to GPU/CPU as discussed in Appendix~\ref{sec:appendix-discussion-efficiency}) associated with activations that are not used in any of them. If the reuse was not happening, and the sparsity of all tokens was purely random, the aggregated sparsity would shrink exponentially and quickly diminish.
Speculative decoding~\cite{leviathan2023fast} is a related technique that uses a smaller model $M_q$ to propose $\gamma$ tokens and a larger model $M_p$ to verify those tokens and select matching ones. It improves the runtime of the model by avoiding running $M_p$ sequentially.

To improve  speculative decoding, aggregated sparsity can trim down the portion of the model that needs to be run. Instead of running the full model, only the non-sparse parts need to be evaluated, which will reduce I/O and compute latency. Suppose the average aggregated sparsity of $M_p$ for $\gamma$ tokens is $\bar{s}_{\text{agg}}(\gamma)$, and cost of running $M_q$ over $M_p$ is $c$. Then the expected latency speedup when going from standard speculative decoding to sparse speculative decoding is $\frac{c \gamma+1}{c \gamma+(1-\bar{s}_{\text{agg}}(\gamma))}$.

\figref{fig:speedup-over-speculative} compares sparse speculative decoding to the standard version for OPT 6.7B model. As a case study, for $\gamma=16$, the sparse version has a 1.27x speedup over the standard speculative decoding. If the aggregated sparsity was random over different tokens, the speedup would have been only 1.20x. Note that even random sparsity will lead to speedup over standard speculative decoding. This further shows the value of relufication. However, the speedup due to random sparsity would diminish quickly in comparison to aggregated sparsity as we go for larger $\gamma$. For example, for $\gamma=64$ the speedup is almost negligible, while the speedup for the aggregated sparsity is around 1.14x. Further discussion and details are postponed to Appendix~\ref{sec:appendix-spec-decode}, where we compare sparse speculative decoding, standard speculative decoding, and autoregressive decoding and discuss optimal $\gamma$ in the case of sparse speculative decoding.

\subsection{The shifted ReLU activation}
\label{sec:shifted-relu}

Our work in this section is motivated by the observation from \secref{sec:relufication}, where, comparing \figref{fig:isolated-llama-1x} with \figref{fig:isolated-falcon-1x} revealed that the relufied Llama has much less sparsity (65\%) than the relufied Falcon model (95\%). In addition, we build on two of our previous findings. First, the preactivation distribution of the relufied Llama~(\figref{fig:preact-llama}) includes a considerable mass after the cutoff value at zero. Second, the shape of the preactivation distribution does not change before and after the relufication process (\figref{fig:preact-llama} and \figref{fig:preact-llama-1x}).

Therefore, we may be able to shift the preactivation distribution to the left to put more volume before the cutoff at 0. To this end, for preactivation input $x$, rather than applying $\text{ReLU}(x)$, we use $\text{ReLU}(x-b)$ where $b \in \mathbb{R}$ is a constant scalar. We propose to set the value $b$ based on the preactivation distribution. For instance, based on the distribution in \figref{fig:preact-llama-1x}, setting $b = 1$ and hence using $\text{ReLU}(x-1)$ as our activation function will result in dropping $95\%$ of the preactivations and make it significantly sparser. Another benefit of this approach is simplicity, as this does not require changing the loss function or the training regime.

Figure~\ref{fig:shifted-acc} shows that the shifted ReLU activation function has on-par accuracy with the ReLU activation function. Moreover, similar to our observation in \secref{sec:relufication}, the shifted ReLU activation quickly recovers the lost performance due to the drastic change of activation function, while it also maintains a very high-level activation sparsity during the finetuning stage. The gap between shifted ReLU and ReLU is wider in the early stages of training, and it narrows down when more tokens are seen.

A deeper investigation of ReLU-variants that can promote sparsity without sacrificing performance is an appealing future direction. Moreover, it will be interesting to study the impact of the shifted ReLU for stage-2 of our relufication process where the sparsity level is usually not very high.

\begin{figure}[t]
\centering
\begin{subfigure}{.35\textwidth}
  \centering
  \includegraphics[width=\textwidth]{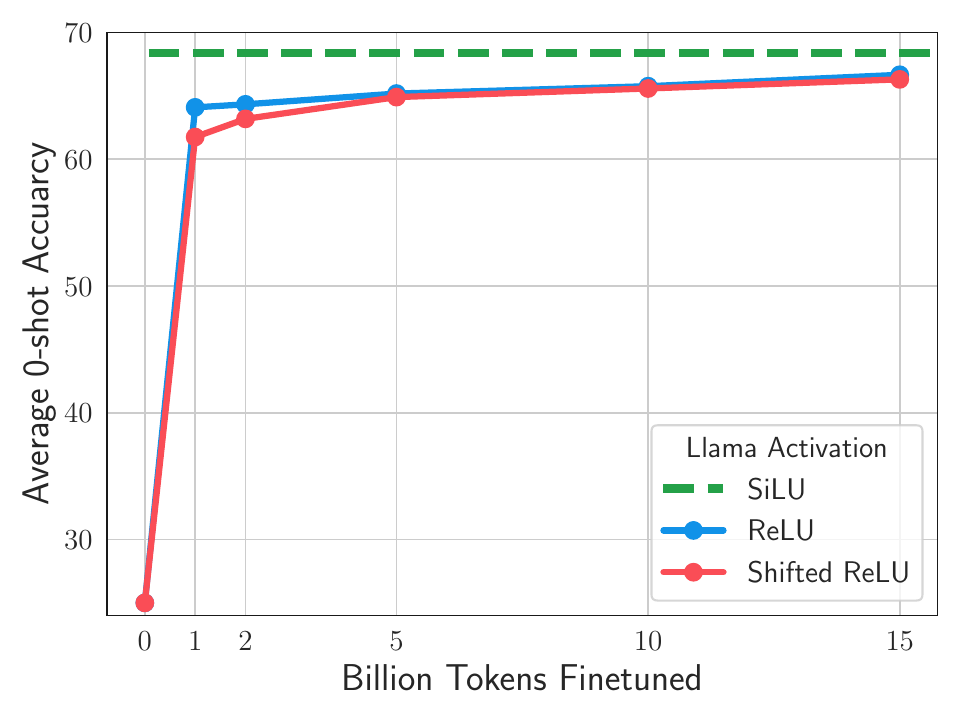}
   \caption{}
  \label{fig:shifted-acc}
\end{subfigure}\hspace{1.5cm}
\begin{subfigure}{.35\textwidth}
  \centering
  \includegraphics[width=\textwidth]{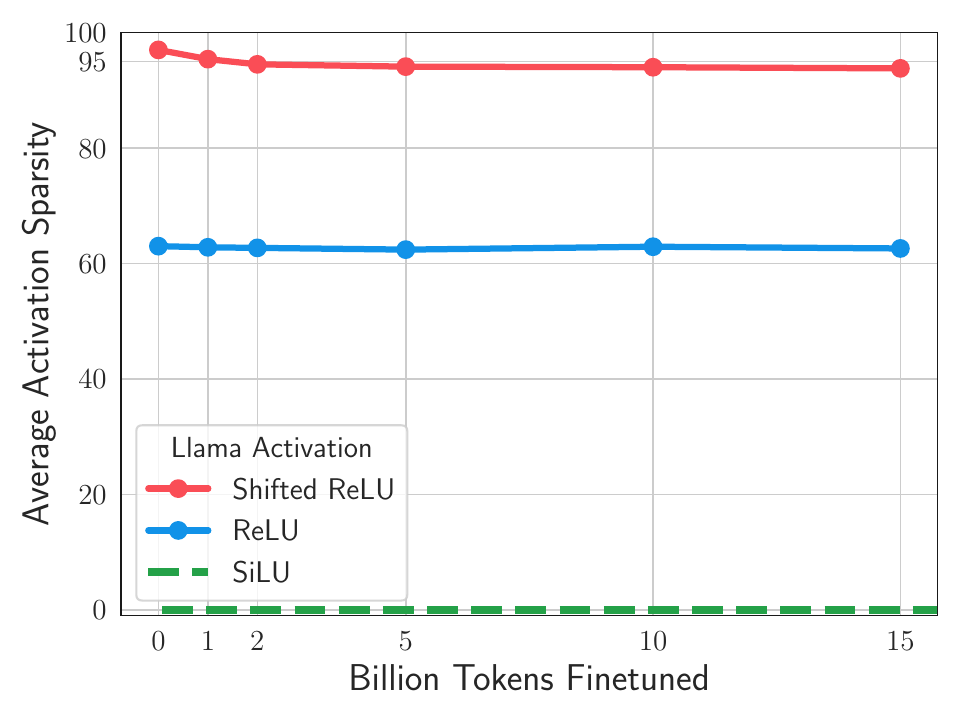}
  \caption{}
  \label{fig:shifted-sparsity}
\end{subfigure}
\caption{The effect of shifted ReLU on Llama model. \textbf{(a)} The performance is almost the same as the original ReLU. \textbf{(b)} Shifted ReLU (i.e., $\text{ReLU}(x-1)$) is much sparser than the original ReLU.}
\label{fig:shifted-relu-figures}
\end{figure}

\section{Conclusion}
In this study, we conducted a large-scale investigation of the activation functions, and we have shown that the choice of activation functions during pretraining and finetuning does not have a significant impact on performance while using ReLU can provide an additional benefit of leading to activation sparsity and more efficient inference. To bridge the gap between existing pre-trained models and our work, we have \emph{relufied} several models to incorporate the ReLU activation function into the architecture of these already pre-trained models. We have shown that across several zero-shot and few-shot tasks, the ReLU-based LLMs perform similarly to their non-ReLU models at a significantly reduced computation. In addition, after observing sparsity patterns in ReLU LLMs, we explored a few promising directions to improve the token generation speed through \emph{aggregated sparsity} and achieve greater efficiency using ReLU-based activation functions like \emph{shifted ReLU}.

We believe our work is among the few studies that investigate changes in the architectural components of LLMs on a large scale. We hope our findings motivate the community to further investigate the advantages of well-structured activation sparsity, ultimately enhancing the efficiency of these models.

\section*{Acknowledgments}
The authors would like to thank Fartash Faghri, Minsik Cho, Thomas Merth, and Mohammad Samragh for their invaluable discussions and feedback on this project.

\clearpage
\newpage

\bibliography{refs}
\bibliographystyle{plainnat}
\newpage
\clearpage
\appendix
\section*{Appendix}
\label{sec:appendix}

\section{Extended Related Works}
\label{sec:appendix-relatedworks}

\textbf{Activation Functions.}
ReLU, introduced by \cite{Fukushima1969VisualFE}, remains a predominant activation function for deep neural networks and was notably utilized in the original transformers work \cite{vaswani2023attention}. SwiGLU \cite{shazeer2020glu} has been shown to enhance performance when replacing ReLU in feedforward layers and is a feature in models like Llama \cite{Llamav1paper}. \citet{narang2021transformer} conducted an extensive comparison of various activation functions, such as GeLU, SiLU \cite{hendrycks2016gaussian, elfwing2018sigmoid}, ELU \cite{clevert2016fast}, SeLU \cite{klambauer2017selfnormalizing}, and GLU variants \cite{10.5555/3305381.3305478}, identifying certain advantages over ReLU. Our paper and results differ from theirs by training billion scale models and data as opposed to their smaller scale ones.
Furthermore, our results indicate that extended training can diminish the performance gap between ReLU and these other functions, also leading to savings in computational costs.

\textbf{Activation Sparsity.}
A body of prior research \cite{kurtz2020inducing, han2023retrospective, song2021training} has demonstrated that increasing sparsity can lead to reductions in both inference and training times. Dejavu \cite{liu2023deja} and \cite{li2022large} observed pronounced sparsity in activations when using the ReLU function in feedforward layers. These studies propose that predicting this sparsity can further boost inference speeds. Similarly, \cite{jaszczur2021sparse} employed ReLU activations and introduced a controller to actively promote sparsity. Notably, these studies predominantly focus on networks employing ReLU activations, leaving out those with alternative activation functions. In contrast, our approach modifies networks by substituting other activation functions with ReLU. We then fine-tune these networks to achieve activation sparsity in the MLP layer post-ReLUfication. We further illustrate that inserting ReLU prior to the QKV and Feedforward layers can substantially reduce FLOPS, albeit at a minor cost to accuracy. Unlike the aforementioned studies, we do not utilize a sparsity predictor to minimize FLOPS.

\textbf{ReLU in Attention Mechanisms.}
\looseness=-1 Beyond the activation function in the MLPs of large language models, a softmax activation is often employed within the attention module. Prior studies have indicated that it's feasible to replace this softmax with ReLU without compromising accuracy \cite{wortsman2023replacing, shen2023study, hron2020infinite}. This avenue of research is distinct from our approach of Relufication, which specifically focuses on activations preceding weight multiplications.

\textbf{Model compression for efficient inference}
Quantization, pruning and distillation are the main three techniques for compressing neural networks \cite{zhu2023survey}. Quantization has been used to reduce model size and faster inference \cite{dettmers2023qlora, liu2023llmqat, park2023lutgemm, dettmers2022llm, lin2023awq, lee2023owq, dettmers2023spqr, kim2023squeezellm, chee2023quip, xiao2023smoothquant}. 
The quantized model occupies less space reducing the memory latency \cite{frantar2023gptq, kim2023memoryefficient}. Reluification is orthogonal to quantization and reduces the amount of memory required to be loaded and can further decrease the memory latency. Distillation \cite{hsieh2023distilling, hinton2015distilling, gu2023knowledge, mirzadeh2020improved, agarwal2023gkd} is another technique to train smaller models. This is orthogonal to using ReLU activations as any activation can be used in distillation methods.
Sparsifying or pruning weights of neural networks \cite{frantar2023sparsegpt, jaiswal2023emergence, zhang2023pruning, sun2023simple, santacroce2023matters, ma2023llm} can reduce computation and inference time. Weight sparsity is usually unstructured and hard to implement for hardware, but the sparsity induced by ReLU can easily be implemented as a matrix multiplication of non zero rows. Weight sparse models can be combined with our relufication for further decrease in compute.  

\textbf{Mixture of Experts.}
 Mixture of Experts (MoE) LLMs usually subdivide the feed-forward layer into multiple experts. A router is then employed to selectively and sparsely activate these experts \cite{shazeer2017outrageously, fedus2022switch, nllbteam2022language}. Similar to our work, MoE is a form of activation sparsity but in a group form and can be seen as a subset of sparse activation. Subsequent studies have further refined the inference and training methodologies for MoE models \cite{puigcerver2023sparse,hwang2023pre,yi2023edgemoe, du2022glam, kong2023serving, rajbhandari2022deepspeedmoe, zoph2022st, Chen2022TaskSpecificEP, Hazimeh2021DSelectkDS}. MoE can be also combined with Relufication, having sparsity inside FFN of each expert.

Another line of work is MoEfication of networks that have sparse activations by subdividing neurons \cite{zhang2022moefication}. Relufication can also help MoEfication be applicable to a wider range of networks by increasing sparsity of FFNs. For a more in depth review of mixture of expert models we refer the reader to \cite{Fedus2022ARO}.

\textbf{Speculative Decoding and Sparsity.} Speculative decoding is a method that aims to improve model latency when faced with memory bandwidth constraints \cite{leviathan2023fast, kim2023speculative}. It involves using a smaller model to predict the next tokens, with a larger model subsequently verifying multiple tokens in a single operation. In this work, we examine the direct effects of incorporating sparsity into speculative decoding. We show that adding sparsity can lead to performance improvements in speculative decoding. Additionally, we provide guidelines on selecting parameters for speculative decoding when sparsity is introduced.

\section{Discussion on Activation Sparsity and Inference Efficiency}
\label{sec:appendix-discussion-efficiency}
The primary motivation behind our work is to enhance \emph{efficiency}, and we believe it is essential to provide a precise definition of this term. Throughout the main text, we predominantly use FLOPS as our efficiency metric. In this section, we argue why, in the presence of \emph{activation sparsity}, FLOPS can serve as a suitable proxy for measuring various efficiency metrics.

Firstly, it is important to be reminded of the two major factors contributing to the efficiency of Large Language Models (LLMs): (1) the total amount of computation and (2) input/output (IO) transfer—i.e., transferring parameters from RAM to CPU/GPU for calculations. Notably, for today's large models, factor (2) acts as the major bottleneck during the inference phase. We refer the reader to the detailed analysis by \citet{liu2023deja}. Ultimately, for a specific target device and assuming an efficient implementation, we believe that the most practical measure of efficiency is \emph{latency} (e.g., the average time to generate a token). However, each device possesses its unique properties, necessitating a more ubiquitous proxy metric. 

\begin{figure}[h]
\centering
\begin{subfigure}{.58\textwidth}
  \centering
  \includegraphics[width=\textwidth]{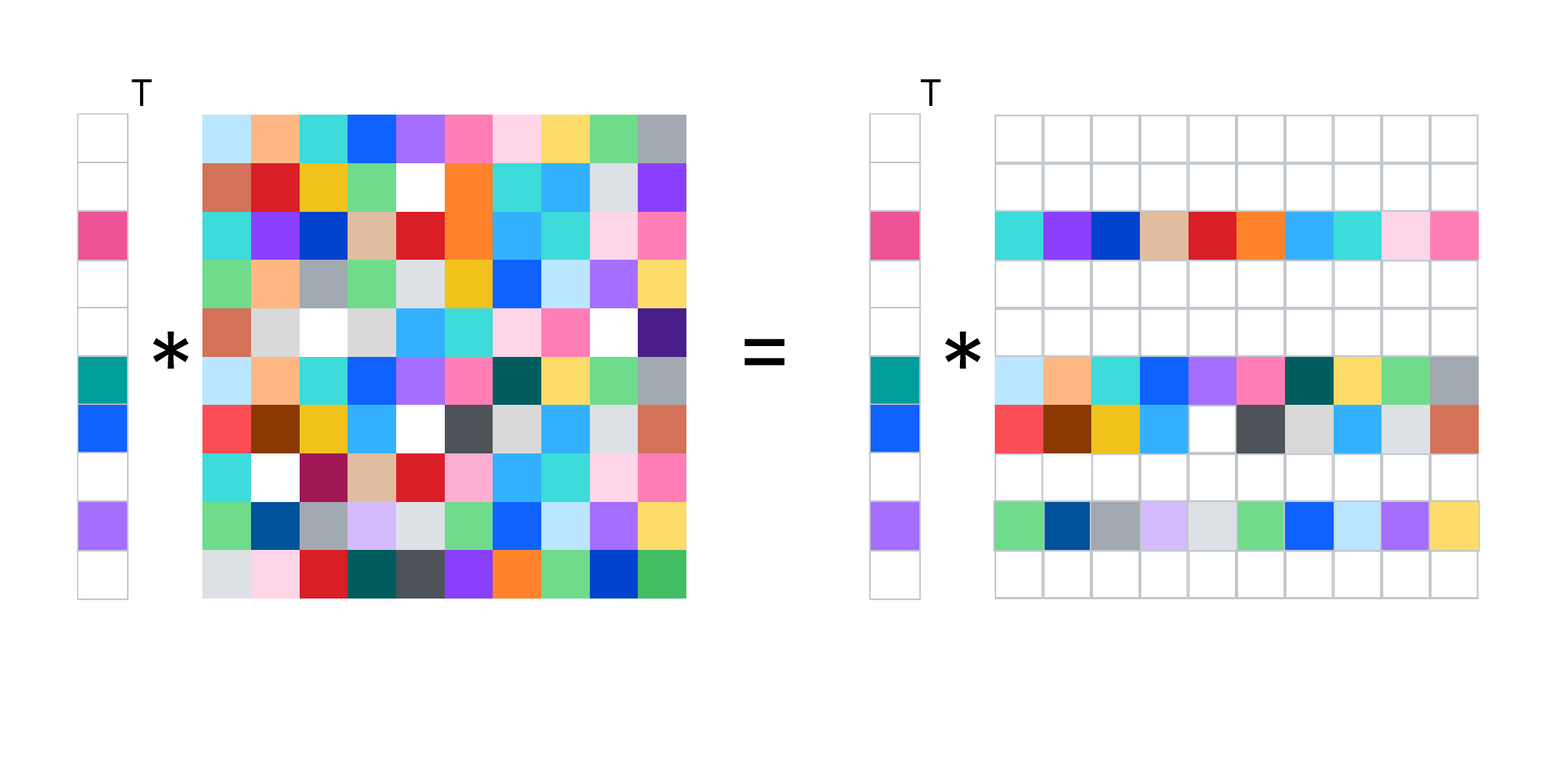}
  \caption{sparse vector, dense matrix multiplication: by skipping rows, we reduce both the weight transfer (i.e., loading these rows for computation) and computation (i.e., the result will be zero).}
  \label{fig:appendix-matvec-illustration}
\end{subfigure}\hfill
\begin{subfigure}{.4\textwidth}
  \centering
  \includegraphics[width=\textwidth]{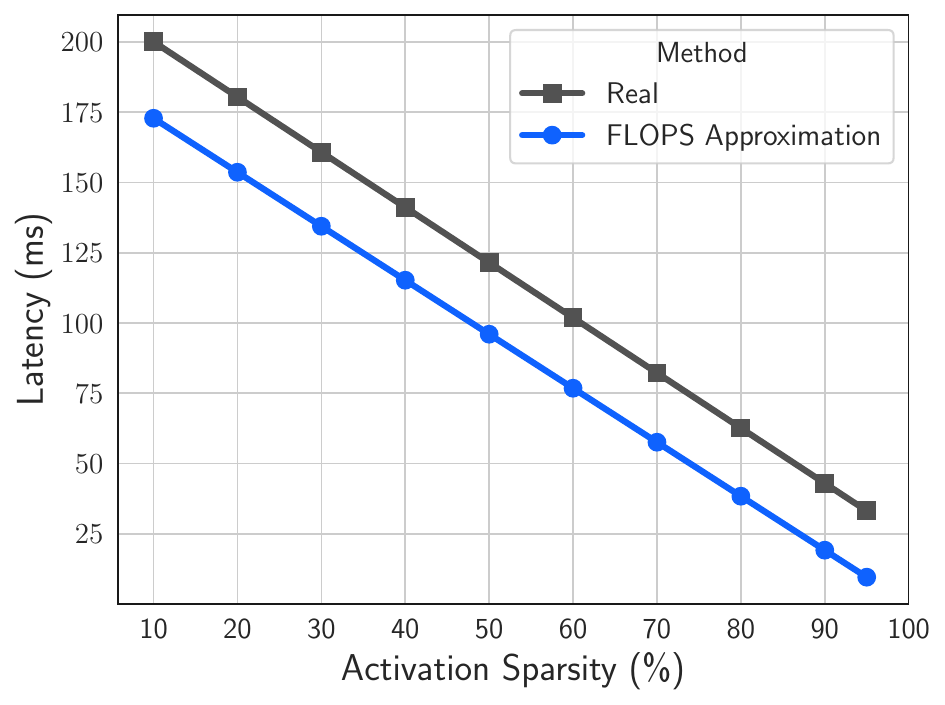}
   \caption{Comparing FLOPS versus real latency for OPT model (FFN).} 
  \label{fig:FLOPS-approx}
\end{subfigure}
\caption{For LLMs that have sparse activations, FLOPS is a good approximation of the real latency.}
\label{fig:appendix-efficiency}
\end{figure}

We argue that the way we calculated \emph{FLOPS} in our paper and is greatly influenced by \emph{activation sparsity} can reasonably approximate \emph{efficiency}. Here are our reasons:
\begin{itemize}[leftmargin=*]
    \item \textbf{Reduced Computation}: As shown in \figref{fig:appendix-matvec-illustration}, with activation sparsity, we have a sparse vector-dense matrix multiplication at inference, while this will be a sparse-matrix-dense matrix multiplication during training. It is important to note that this is a semi-structured sparsity (unlike magnitude weight pruning), and we can leverage the fact that we are transferring weights in large chunks (i.e., \emph{rows}). Modern accelerators already support sparse operations\footnote{For example, both \href{https://docs.nvidia.com/cuda/cusparse/index.html}{cuSPARSE} on NVIDIA CUDA® and \href{https://developer.apple.com/documentation/accelerate/sparse_solvers/sparse_matrix_and_dense_matrix_multiplication}{Accelerate} on Apple devices.} and we can build on these existing tools. 
    \item \textbf{Reduced IO Transfer}: During inference, weights need to be transferred to the device cache for computation (e.g., from RAM to CPU cache or GPU VRAM to GPU cache). This step constitutes the main bottleneck during token generation. For instance, approximately $99.3\%$ of the total latency is attributed to IO, as indicated by \citet{liu2023deja}. However, by storing matrices in a row-major order, we can \emph{skip loading} unnecessary rows as the output will be zero.
\end{itemize}

Overall, as depicted in Figure \ref{fig:FLOPS-approx} based on the calculations by~\citet{liu2023deja}, we demonstrate that for the OPT model on an NVIDIA A100 node,  counting FLOPS provides a reasonable approximation to and is highly correlated with the time needs to generate tokens, especially, for LLMs with activation sparsity.

\section{Activation Sparsity and Speculative Decoding}
\label{sec:appendix-spec-decode}

Speculative decoding~\cite{leviathan2023fast} is a technique that uses a smaller model $M_q$ to propose $\gamma$ tokens and a larger model $M_p$ to verify those tokens and selects matching ones. This technique improves the runtime of the model by avoiding running $M_p$ sequentially per all tokens. To further improve the speculative decoding inference, we can leverage sparsity as follows.

\textbf{Latency model. }
 We assume a simple conceptual model for latency in speculative decoding. Following Deja Vu \cite{liu2023deja} latency can be broken down into compute and I/O latency. The compute latency is negligible to I/O. Also, notice that the Speculative decoding is meant for the constraints that memory bandwidth is the bottleneck. Therefore we only compare I/O latency between sparse and non-sparse models. If the average aggregated sparsity of $M_p$ for $\gamma$ tokens is $\bar{s}_{\text{agg}}(\gamma)$, and runtime of $M_p$ is $T$, we approximate the latency of running $M_p$ for $\gamma$ consecutive tokens with $(1-\bar{s}_{\text{agg}}(\gamma))T$. As discussed in the previous section, this is a good approximation of real latency.

\textbf{Theoretical latency improvement.} Assume the smaller model $M_q$ operates $c$ times faster than the cumbersome model $M_p$. As per the primary text, token acceptances are assumed to follow an independent and identically distributed (i.i.d.) behavior. Denote \( \alpha \) as the expected probability of a token generated by \( M_q \)  being approved by \( M_p \).   The following theorems hold:

\newtheorem{theorem}{Theorem}

\begin{theorem}
\label{thm:speedup-theorem1}
The expected improvement factor in latency for speculative decoding with sparsity, over standard speculative decoding is $\frac{c \gamma+1}{c \gamma+(1-\bar{s}_{\text{agg}}(\gamma))}$.
\end{theorem}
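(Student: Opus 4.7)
The plan is to compare the per-cycle latency of standard versus sparse speculative decoding while arguing that the expected number of tokens produced per cycle is identical in both schemes, so that the speedup reduces to an inverse ratio of per-cycle latencies. I would first account for what happens in one cycle of each variant, and then take the ratio.

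First I would compute the per-cycle latency of standard speculative decoding: in one cycle the draft model $M_q$ is invoked $\gamma$ times sequentially to propose tokens, at cost $c\gamma T$, and then the target model $M_p$ is invoked once in parallel over the entire drafted prefix, at cost $T$, for a total of $(c\gamma+1)T$. Next I would compute the per-cycle latency of the sparse variant: the draft phase is untouched and still costs $c\gamma T$, but by the latency model assumed in the theorem --- justified in Appendix~\ref{sec:appendix-discussion-efficiency} by the row-major storage argument, which makes I/O the dominant bottleneck and makes the saving proportional to the fraction of unused rows --- the single verification pass of $M_p$ on a batch whose aggregated sparsity is $\bar{s}_{\text{agg}}(\gamma)$ costs only $(1-\bar{s}_{\text{agg}}(\gamma))T$. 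The total per-cycle latency of sparse speculative decoding is therefore $(c\gamma+1-\bar{s}_{\text{agg}}(\gamma))T$.

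The key observation is that skipping rows whose ReLU-gated activations are zero is numerically exact, so $M_p$ under the sparse scheme accepts exactly the same tokens, with exactly the same probabilities, as under the standard scheme. Under the i.i.d.\ acceptance assumption, the expected number of accepted tokens per cycle is then a function only of $\alpha$ and $\gamma$ and is common to both variants; denote it $N(\alpha,\gamma)$. Since expected throughput equals $N(\alpha,\gamma)$ divided by per-cycle latency, forming the ratio of sparse to standard throughput makes $N(\alpha,\gamma)$ cancel and yields
\begin{equation*}
\frac{(c\gamma+1)T}{(c\gamma+1-\bar{s}_{\text{agg}}(\gamma))T} \;=\; \frac{c\gamma+1}{c\gamma+(1-\bar{s}_{\text{agg}}(\gamma))},
\end{equation*}
which is the claim. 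The only delicate points --- more modeling assumptions than obstacles --- are that aggregated row-sparsity translates proportionally to verification latency (precisely the latency model of Appendix~\ref{sec:appendix-discussion-efficiency}, which requires I/O to dominate compute) and that $\alpha$ is invariant under sparsification (which holds because ReLU-gated sparse matrix-vector products are bit-exact replacements for their dense counterparts); once both are granted, the ratio above is immediate.
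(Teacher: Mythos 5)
Your proof is correct and follows essentially the same route as the paper's: compute the per-cycle latency of each variant ($(c\gamma+1)T$ versus $(c\gamma+1-\bar{s}_{\text{agg}}(\gamma))T$), note that the expected number of accepted tokens per cycle is unchanged, and take the ratio. Your added remarks --- that sparsification is numerically exact so $\alpha$ is invariant, and that the latency saving is proportional to aggregated row-sparsity under the I/O-bound model --- simply make explicit the assumptions the paper states informally just before the theorem.
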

\begin{proof}
The amount of time required to run sparsified model is quantified as \(Tc\gamma + (1-\bar{s}_{\text{agg}}(\gamma))T\). It is the time of running a smaller model plus a larger model's non-sparse portion.  Run time of speculative decoding without sparsity is \(Tc\gamma + T\). The number of generated tokens in both is the same. Therefore the relative speedup of using sparsity is given by: 
$\frac{Tc \gamma+T}{Tc\gamma + (1-\bar{s}_{\text{agg}}(\gamma))T}$.
\end{proof}

\begin{theorem}
\label{thm:speedup-theorem2}
The expected improvement factor in latency, when combining sparsity with speculative decoding against normal (autoregressive) decoding using only \( M_p \), is $\frac{ 1 - \alpha^{\gamma+1} }{(c \gamma + (1 - \bar{s}_{\text{agg}}(\gamma)))(1- \alpha)}$.
\end{theorem}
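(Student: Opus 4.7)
The plan is to combine the per-iteration latency from Theorem~\ref{thm:speedup-theorem1} with the standard expected-tokens-per-iteration calculation from speculative decoding, and then take the ratio against naive autoregressive decoding. First I would fix notation: let $T$ be the I/O-dominated latency of one forward pass of $M_p$, so that autoregressive generation of $N$ tokens costs $NT$. By the latency model already adopted (and used in the proof of Theorem~\ref{thm:speedup-theorem1}), one round of sparse speculative decoding consists of (i) running $M_q$ to draft $\gamma$ tokens at cost $c\gamma T$ and (ii) a single verification pass of $M_p$, which on sparse activations costs $(1-\bar{s}_{\text{agg}}(\gamma))T$. Hence each round costs $T_{\text{round}} = T\bigl(c\gamma + (1-\bar{s}_{\text{agg}}(\gamma))\bigr)$.

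Next I would compute the expected number of tokens produced in a single round. Let $X$ be the number of drafted tokens accepted in the verification step. Under the i.i.d.\ acceptance assumption (each draft independently accepted with probability $\alpha$), $X$ is truncated-geometric: $\Pr(X=j)=\alpha^j(1-\alpha)$ for $0\le j<\gamma$ and $\Pr(X=\gamma)=\alpha^\gamma$. The speculative-decoding protocol always emits one extra token after verification (either a resample from the correction distribution when a draft is rejected, or a bonus sample from $M_p$ when all $\gamma$ are accepted), so the number of new tokens per round is $X+1$. A direct calculation gives
\begin{equation*}
\E[X+1] \;=\; \sum_{j=0}^{\gamma-1}(j+1)\alpha^j(1-\alpha) + (\gamma+1)\alpha^\gamma \;=\; \sum_{j=0}^{\gamma}\alpha^j \;=\; \frac{1-\alpha^{\gamma+1}}{1-\alpha},
\end{equation*}
which is the well-known Leviathan--Matias identity; I would either re-derive it by this telescoping or cite it.

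Finally I would form the speedup ratio. The expected latency per generated token for sparse speculative decoding is $T_{\text{round}}/\E[X+1]$, whereas autoregressive decoding pays $T$ per token. Dividing yields
\begin{equation*}
\text{Speedup} \;=\; \frac{T}{T_{\text{round}}/\E[X+1]} \;=\; \frac{\E[X+1]}{c\gamma + (1-\bar{s}_{\text{agg}}(\gamma))} \;=\; \frac{1-\alpha^{\gamma+1}}{\bigl(c\gamma + (1-\bar{s}_{\text{agg}}(\gamma))\bigr)(1-\alpha)},
\end{equation*}
which is exactly the claimed factor.

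The routine arithmetic is the telescoping of the geometric-style sum; the only real subtlety, and what I expect to be the main obstacle, is justifying that the two modeling assumptions carry over cleanly. Specifically, one must argue (a) that the aggregate sparsity $\bar{s}_{\text{agg}}(\gamma)$ measured across the $\gamma$ verification positions is the right quantity to save I/O during a single batched verification pass (so the cost $(1-\bar{s}_{\text{agg}}(\gamma))T$ really is realized, not just $\gamma(1-s_{\text{tok}})T$), and (b) that the acceptance probability $\alpha$ and the sparsity statistic are effectively decoupled so that the latency and token-count expectations can be multiplied independently. Both are taken as standing assumptions in the paper's latency model, so once they are invoked, the remainder is the elementary calculation above.
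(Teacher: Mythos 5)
Your proof is correct and follows essentially the same route as the paper's: cost one round of sparse speculative decoding at $T\bigl(c\gamma + (1-\bar{s}_{\text{agg}}(\gamma))\bigr)$, divide by the expected $\frac{1-\alpha^{\gamma+1}}{1-\alpha}$ tokens per round, and compare against the autoregressive cost $T$ per token. The only difference is that you re-derive the expected-tokens identity from the truncated-geometric acceptance distribution, whereas the paper simply cites it from Leviathan et al.; your closing caveats about the latency-model assumptions are reasonable and are indeed taken as standing assumptions in the paper.
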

\begin{proof}
Similar to theorem above \(Tc\gamma + (1-\bar{s}_{\text{agg}}(\gamma))T\) gives the time required for sparse speculative decoding.  According to the original paper, the standard speculative decoding yields an average of \(\frac{1-\alpha^{\gamma+1}}{1-\alpha}\) tokens generated per each run~\cite{leviathan2023fast}. Thus, the anticipated run time when generating tokens with sparsity during speculative decoding becomes $
\frac{(c \gamma + (1 - \bar{s}_{\text{agg}}(\gamma)))(1- \alpha)}{ 1 - \alpha^{\gamma+1} } T $. 
Given the runtime for producing a single token via an autoregressive approach is \(T\), the inverse of this fraction gives the desired results.
\end{proof}

\textbf{Optimal $\gamma$.} The optimal $\gamma$ for speculative decoding can be found by optimizing the speedup factor equation in Theorem \ref{thm:speedup-theorem2}. When sparsity is not present, the equation can be solved numerically, but for reluified networks, the aggregated sparsity for different $\gamma$'s will affect the final results. We have found optimal $\gamma$s based on $\bar{s}_{\text{agg}}(\gamma)$ for OPT 6.7B and presented the results in figure \ref{fig:optimal-gamma}. The chosen $\gamma$ for sparse speculative decoding is smaller than standard speculative decoding since higher $\gamma$ will result in less sparsity. The gap in $\gamma$ is always less than 20\%. Also, in figure \ref{fig:speedup-over-autoregressive}, it can be seen for the specific case of $\alpha=0.8, c = 0.02$, the sparse speculative decoding has the highest speed-up factor over autoregressive at $\gamma=10$s vs standard version's optimal point which happens for $\gamma=12$. Sparse speculative decoding at $\gamma=12$ is better than standard speculative decoding at $\gamma=12$, while sparse speculative decoding at $\gamma=10$ beats both. Another observation from \ref{fig:speedup-over-autoregressive} is for the case of purely random sparsity, the benefit of sparse speculative decoding would diminish over standard speculative decoding in higher $\gamma$s. In contrast, the benefits of aggregated sparsity would last for larger values of $\gamma$.

\begin{figure}[h]
\centering
\begin{subfigure}{.38\textwidth}
  \centering
  \includegraphics[width=\textwidth]{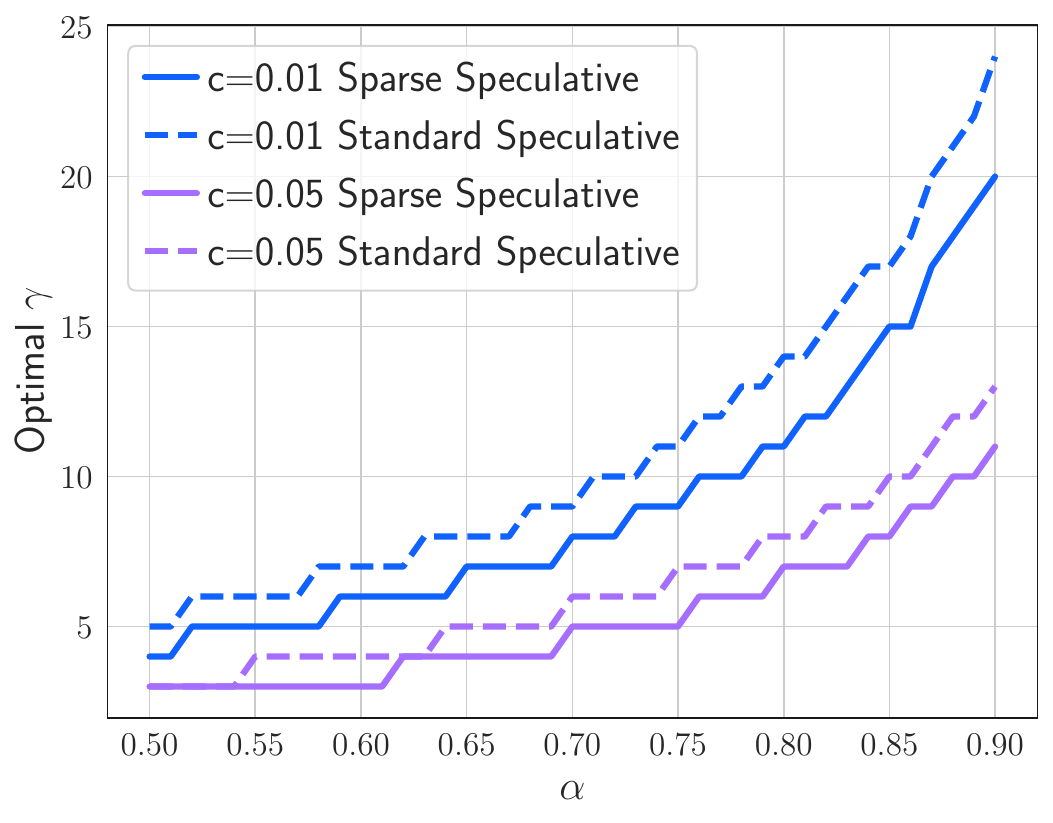}
  \caption{}
  \label{fig:optimal-gamma}
\end{subfigure}\hspace{1.5cm}
\begin{subfigure}{.41\textwidth}
  \centering
  \includegraphics[width=\textwidth]{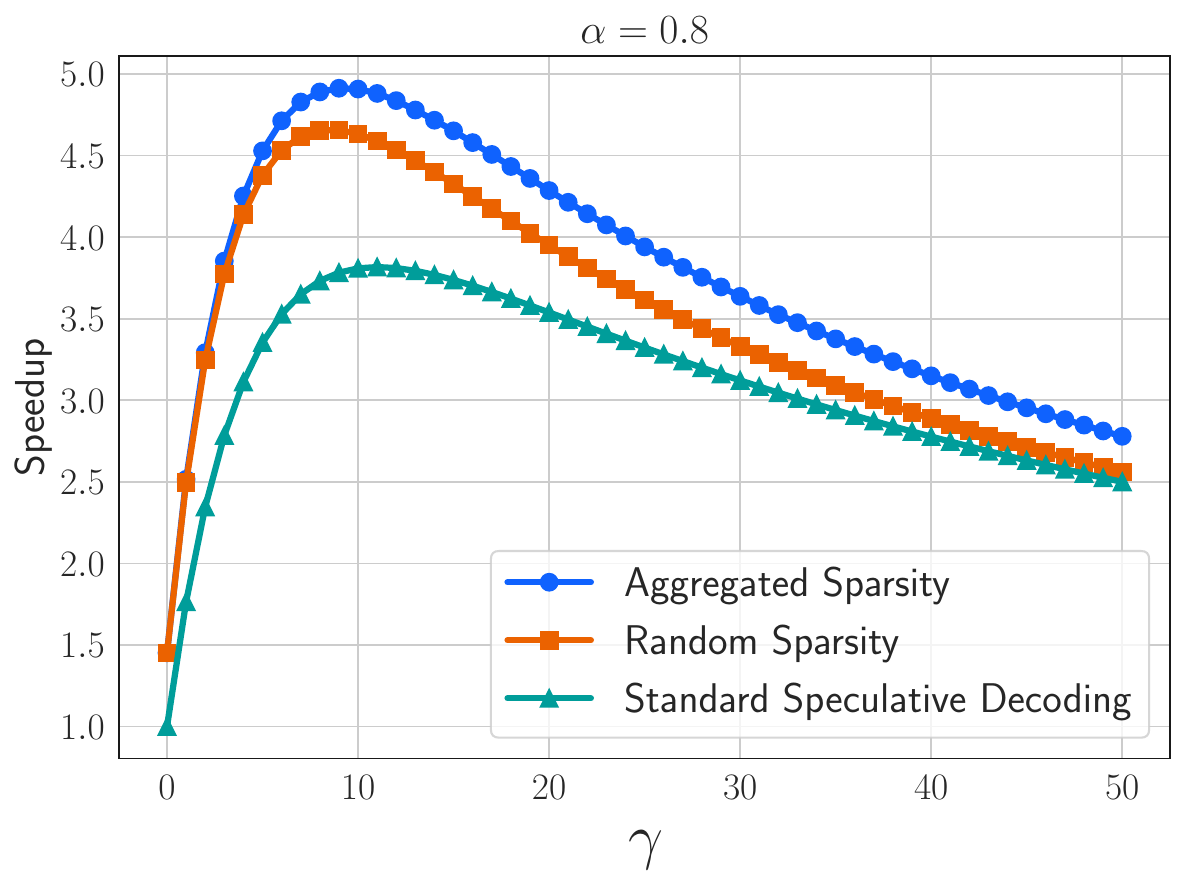}
   \caption{}
  \label{fig:speedup-over-autoregressive}
\end{subfigure}

\caption{ (a) optimal $\gamma$ for sparse speculative decoding (b) speed up of sparse speculative decoding and standard speculative decoding over autoregressive decoding when $\alpha=0.8$ and $c=0.02$}
\label{fig:speculative-decoding}
\end{figure}

\section{Pre-activation Distribution of OPT models trained from scratch}
\label{sec:appendix-preact-dist-opt}
The distribution of pre-activation inputs is suspected to be the main factor determining the amount of sparsity. As we saw in \secref{sec:relu_stage_1}, the pre-activation distribution for Llama and Falcon differs a lot. One may wonder that if we control for the training data and optimization algorithm, would the shapes of the distribution differ? To this end, we train OPT 1.3B models from scratch using our four variants of the activation function and depicted the pre-activation distribution along the training in \figref{fig:appendix-preact-dist-opt}. They start from the exact figure but gradually diverge. From SiLU to ReLU (increasing $\beta$), the pre-activation distribution becomes more concentrated around 0 and would be almost uni-modal. Further investigations on the dynamics of pre- and post-activations and their relation to efficiency and accuracy are left as an exciting future direction for research.
\begin{figure}[h]
\centering
\begin{subfigure}{.21\textwidth}
  \centering
  \includegraphics[width=\textwidth]{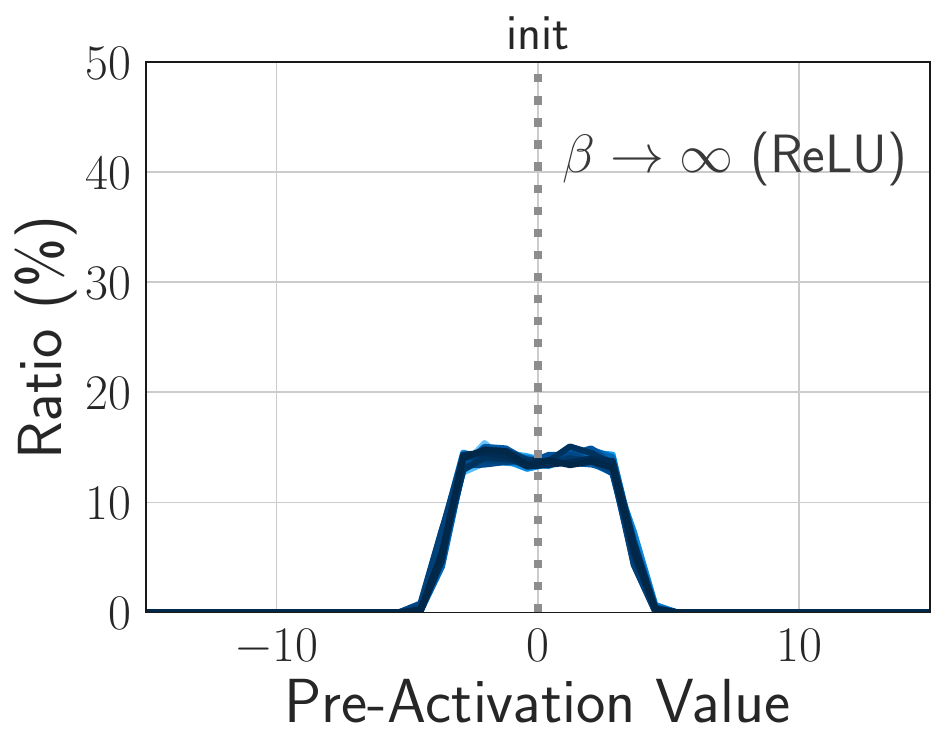}
\end{subfigure}\hfill
\begin{subfigure}{.21\textwidth}
  \centering
  \includegraphics[width=\textwidth]{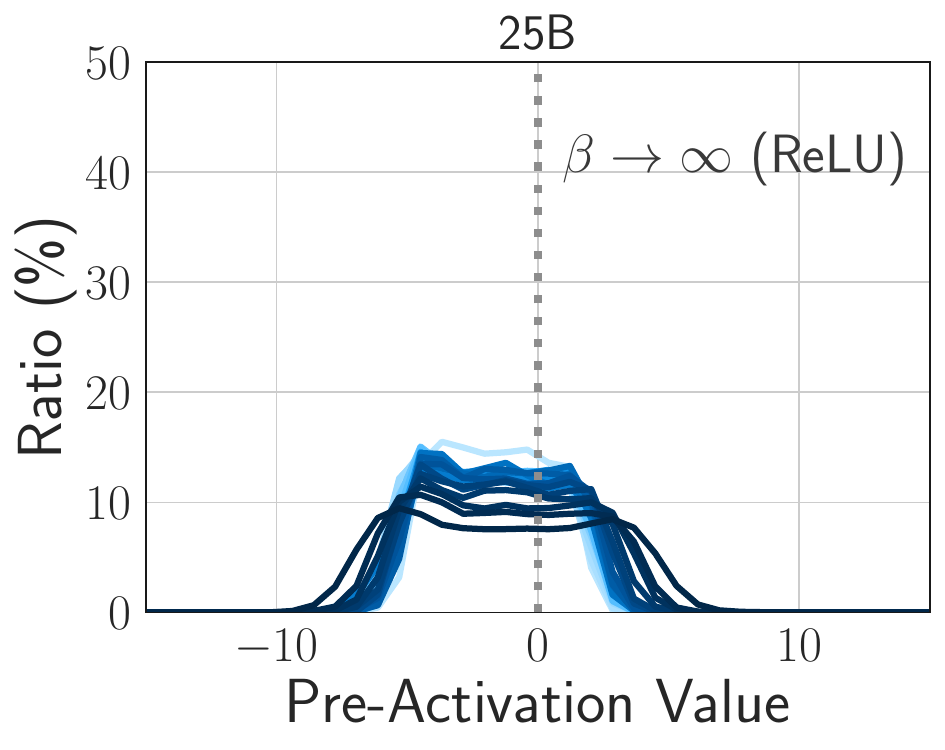}
\end{subfigure}\hfill 
\begin{subfigure}{.21\textwidth}
  \centering
  \includegraphics[width=\textwidth]{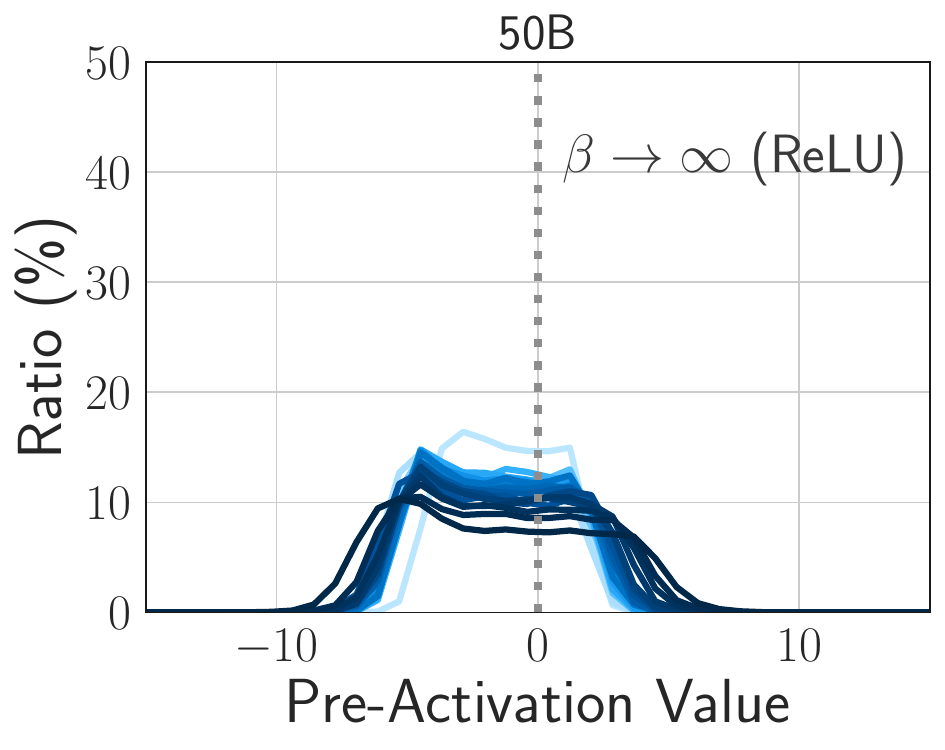}
\end{subfigure}\hfill
\begin{subfigure}{.241\textwidth}
  \centering
  \includegraphics[width=\textwidth]{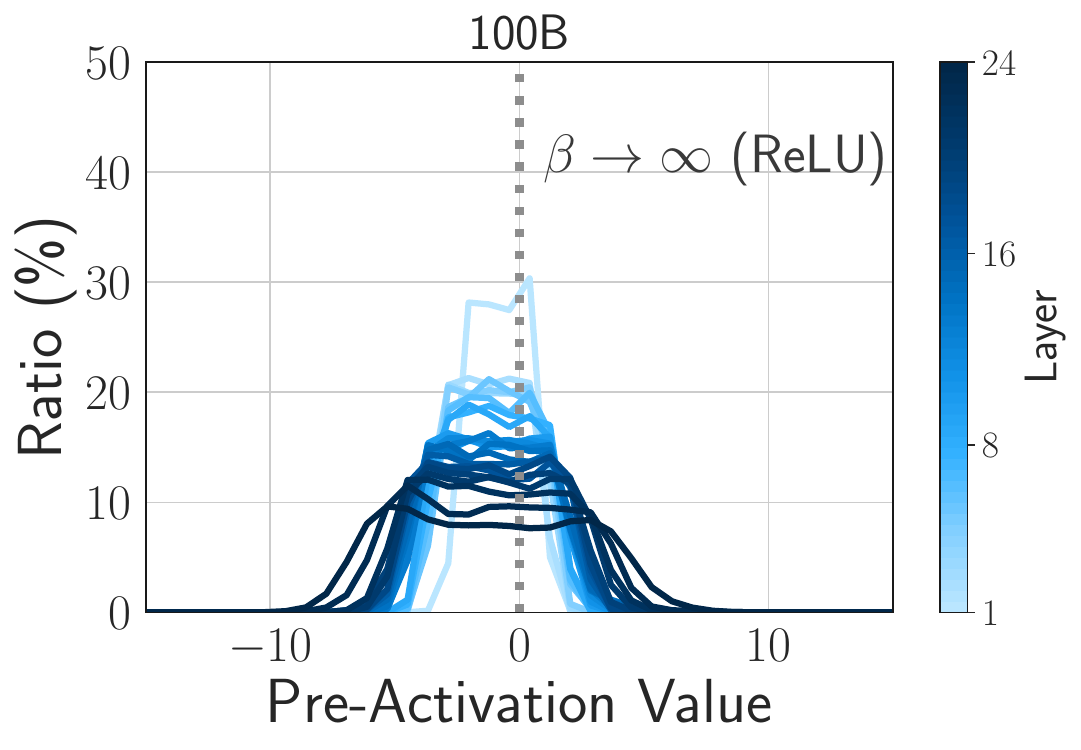}
\end{subfigure}\hfill 
\begin{subfigure}{.21\textwidth}
  \centering
  \includegraphics[width=\textwidth]{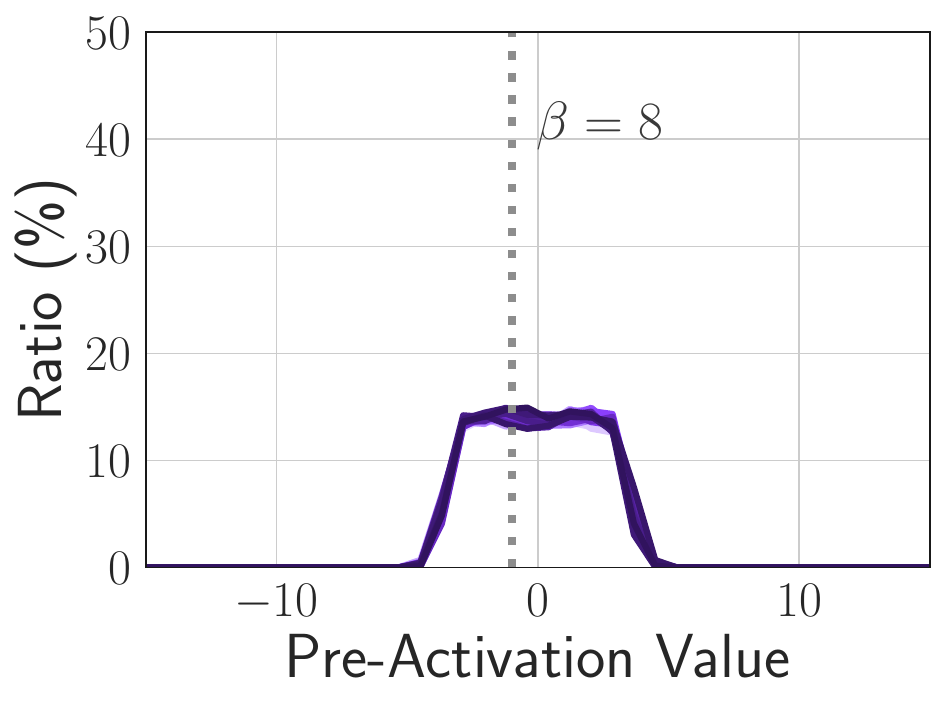}
\end{subfigure}\hfill
\begin{subfigure}{.21\textwidth}
  \centering
  \includegraphics[width=\textwidth]{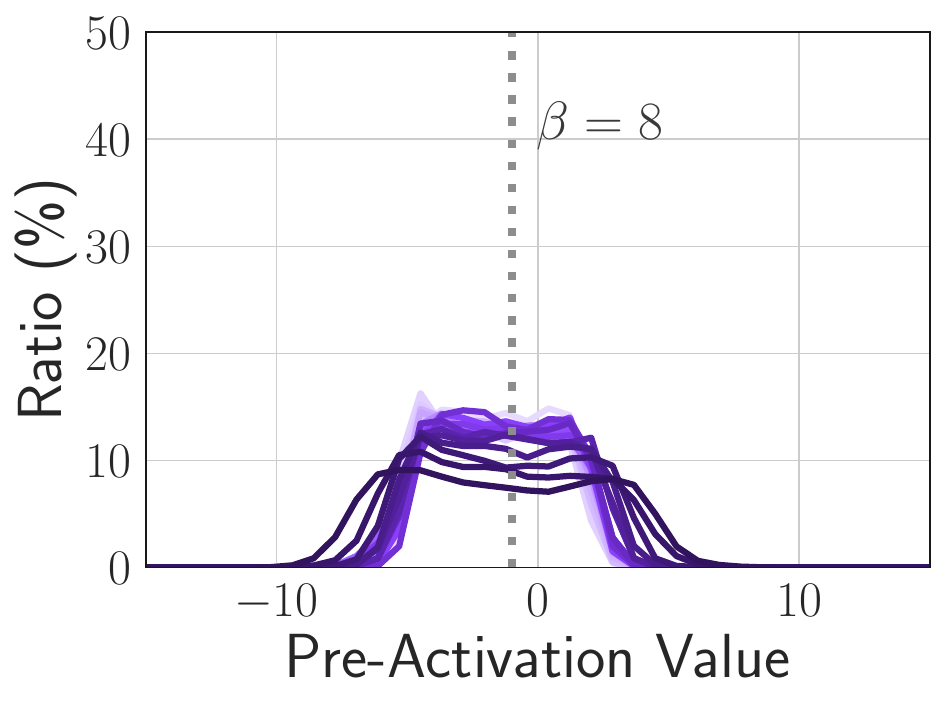}
\end{subfigure}\hfill 
\begin{subfigure}{.21\textwidth}
  \centering
  \includegraphics[width=\textwidth]{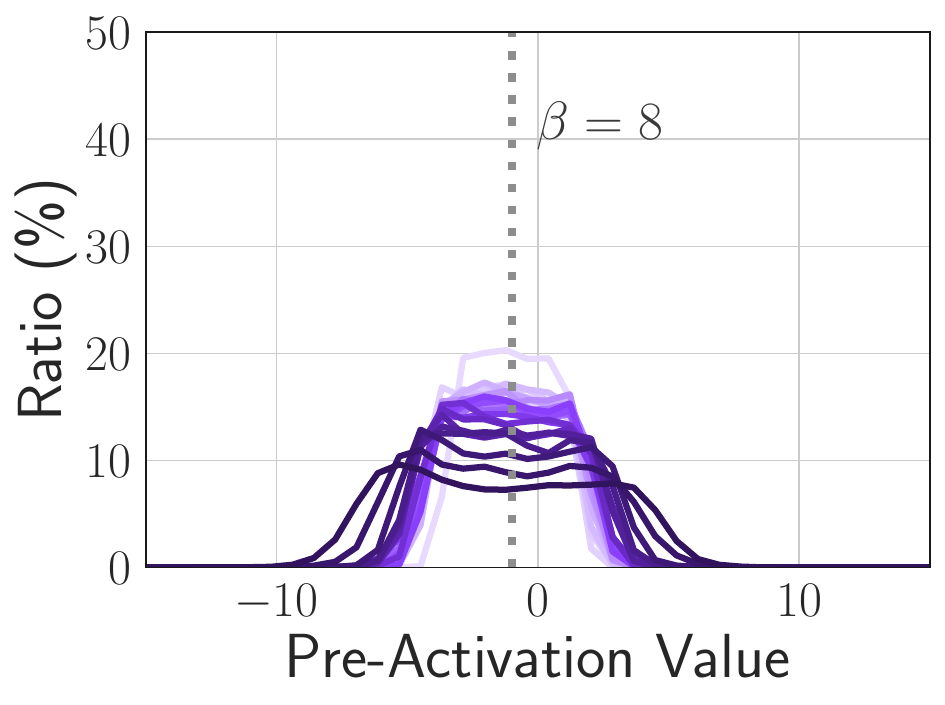}
\end{subfigure}\hfill
\begin{subfigure}{.241\textwidth}
  \centering
  \includegraphics[width=\textwidth]{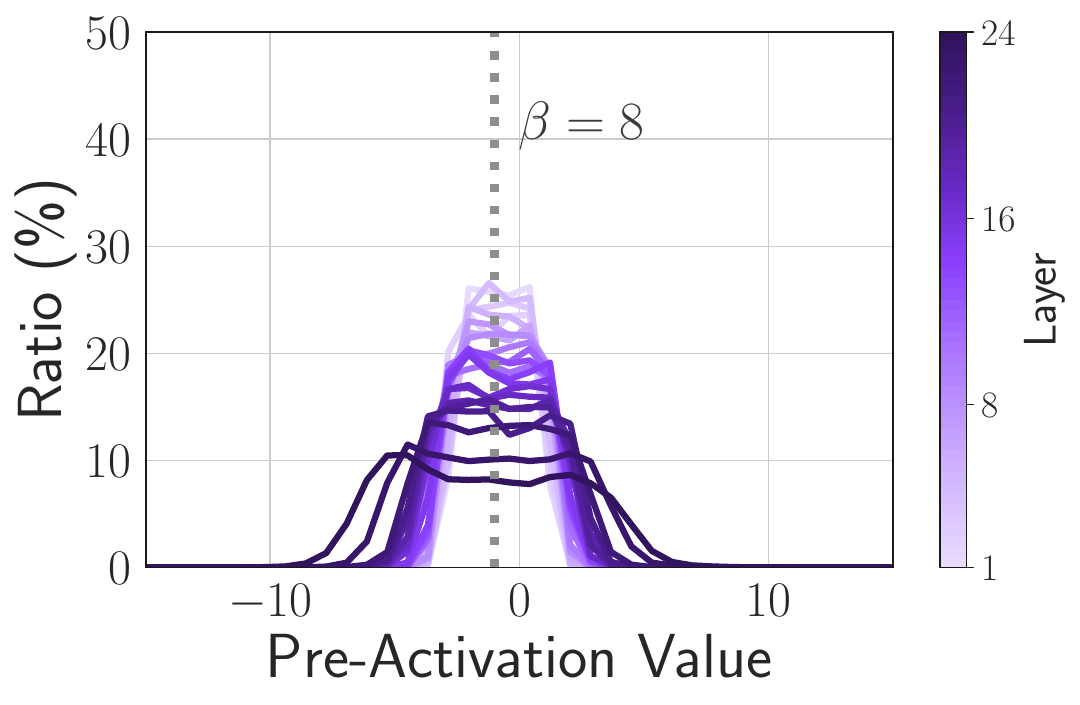}
\end{subfigure}\hfill 
\begin{subfigure}{.21\textwidth}
  \centering
  \includegraphics[width=\textwidth]{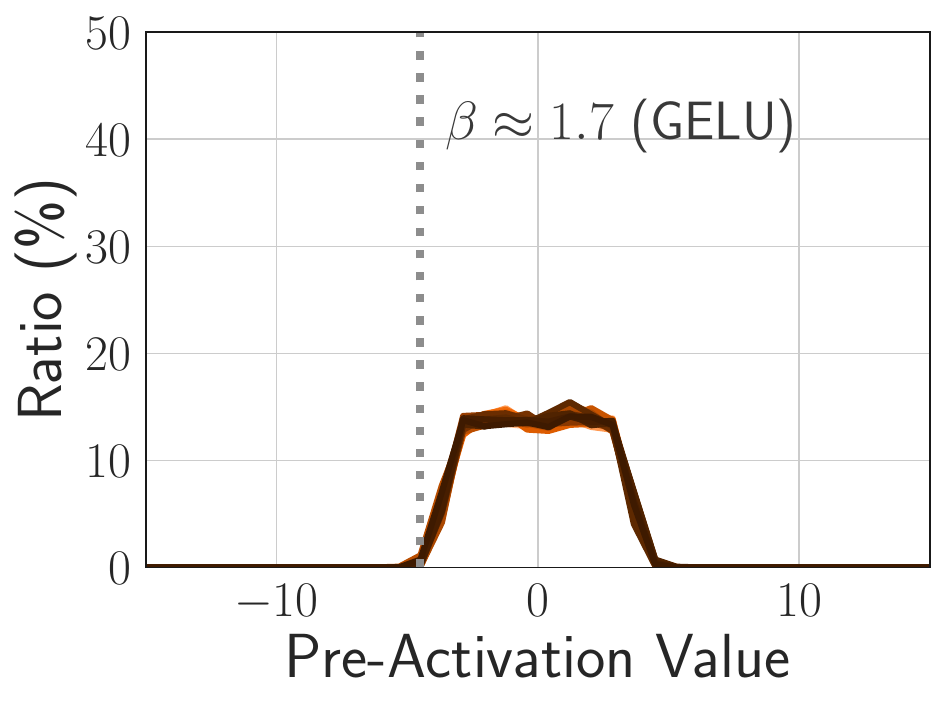}
\end{subfigure}\hfill
\begin{subfigure}{.21\textwidth}
  \centering
  \includegraphics[width=\textwidth]{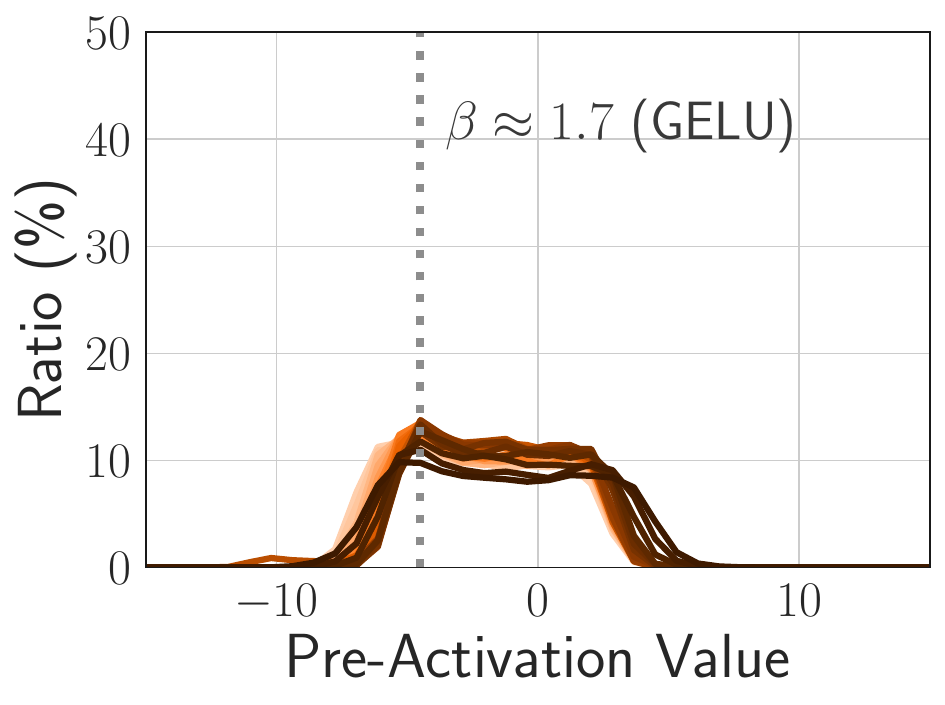}
\end{subfigure}\hfill 
\begin{subfigure}{.21\textwidth}
  \centering
  \includegraphics[width=\textwidth]{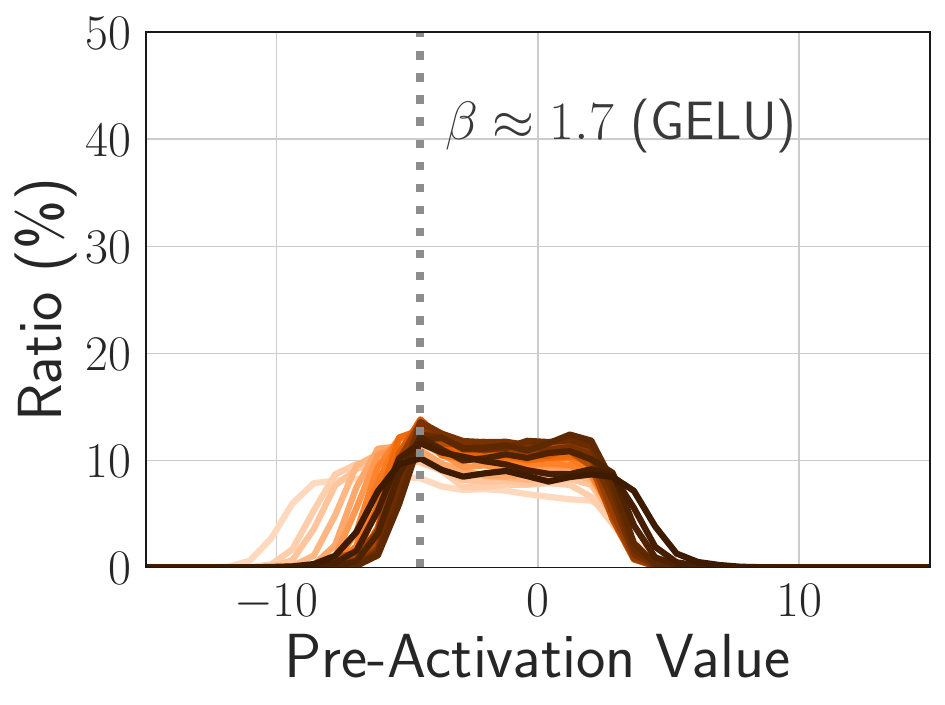}
\end{subfigure}\hfill
\begin{subfigure}{.241\textwidth}
  \centering
  \includegraphics[width=\textwidth]{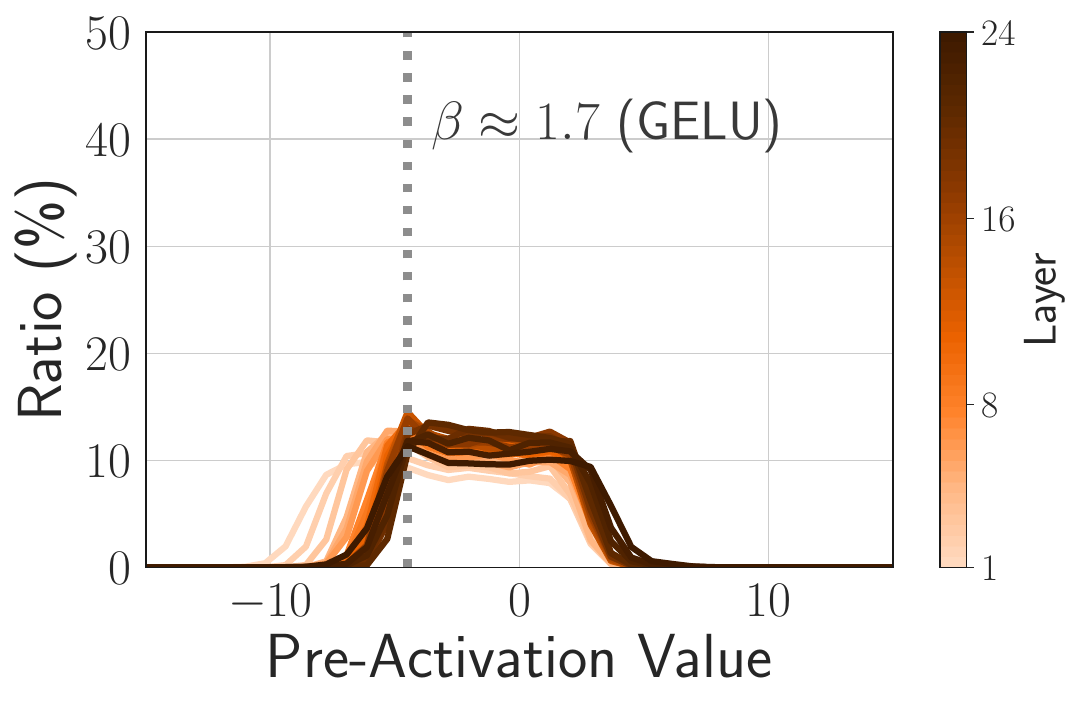}
\end{subfigure}\hfill 
\begin{subfigure}{.21\textwidth}
  \centering
  \includegraphics[width=\textwidth]{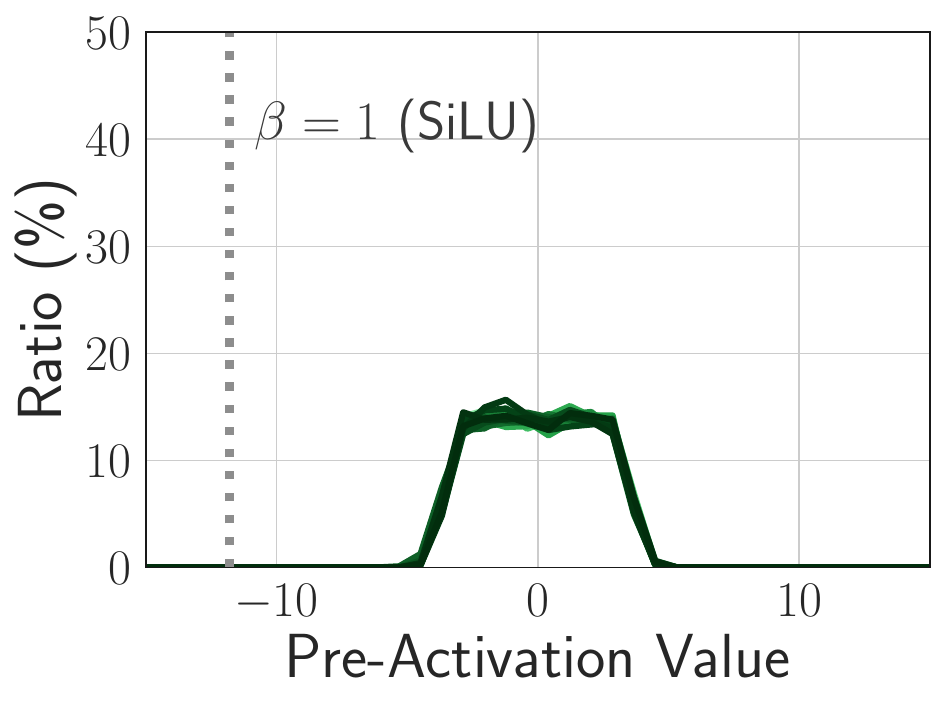}
\end{subfigure}\hfill
\begin{subfigure}{.21\textwidth}
  \centering
  \includegraphics[width=\textwidth]{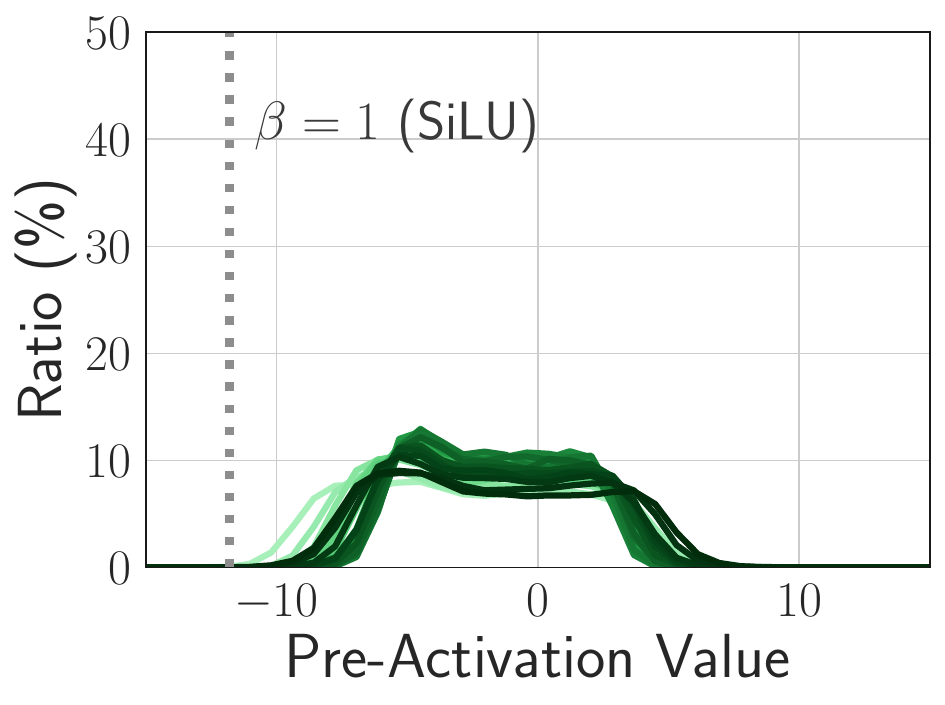}
\end{subfigure}\hfill 
\begin{subfigure}{.21\textwidth}
  \centering
  \includegraphics[width=\textwidth]{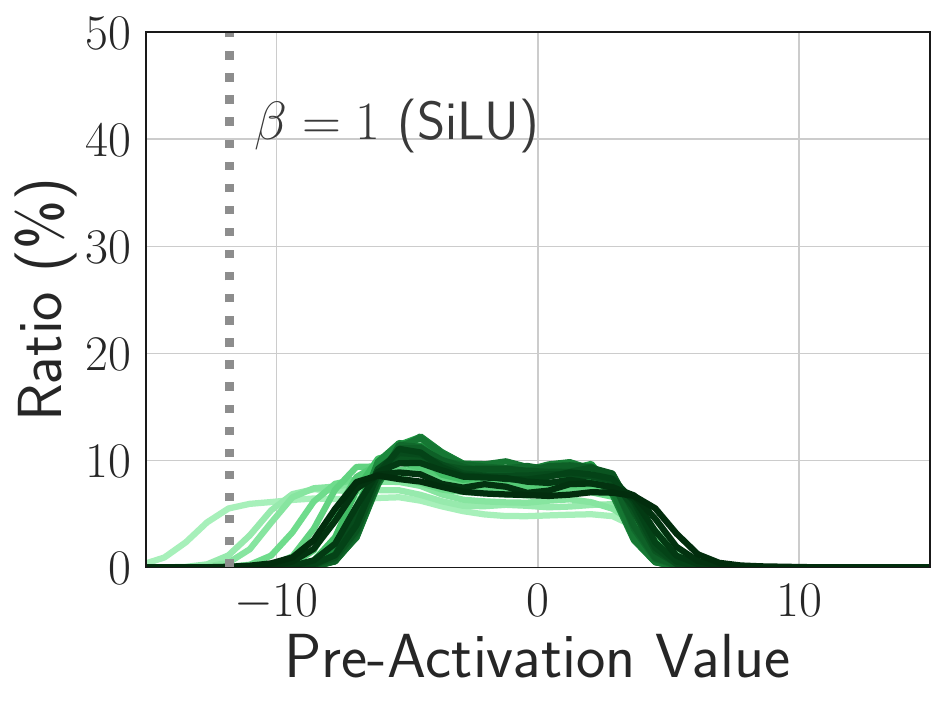}
\end{subfigure}\hfill
\begin{subfigure}{.241\textwidth}
  \centering
  \includegraphics[width=\textwidth]{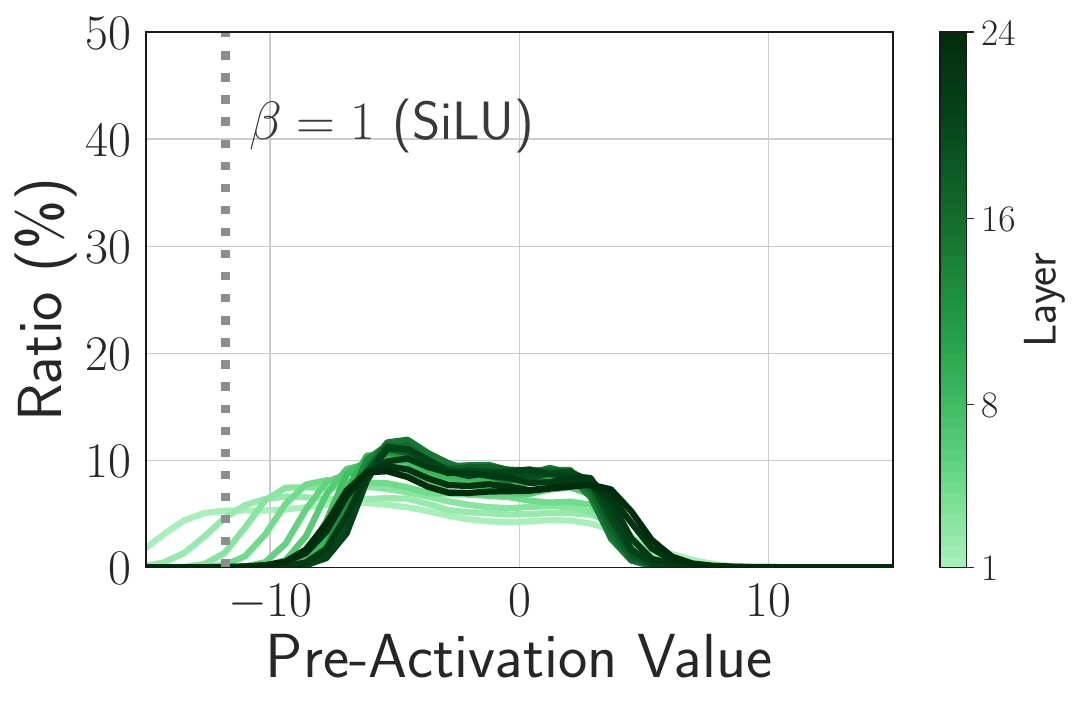}
\end{subfigure}\hfill 
\caption{Pre-activation distributions of various OPT 1.3B models with all four types of activations trained from scratch at various number of seen tokens during training.}
\label{fig:appendix-preact-dist-opt}
\end{figure}

\section{Is a sparsified large model better than dense smaller ones?}
\label{appendix:scaling-opt}
When it comes to deploying efficient models, one may naturally use an original smaller-size (dense) model. The argument would be the performance of the relufied larger model might be already equal to or less than the smaller dense model. To study the above question, we plotted the performance vs. efficiency of the original and the relufied OPT models in~\figref{fig:relufication-results-scaling}. Taking the relufied OPT 6.7B model as an example, it operates at 2.8 GFLPOPs per token. Interpolating the blue line (that can be seen as a scaling plot of the OPT model), a dense model with equivalent FLOPS falls more than 2\%  short in zero-shot performance.

Similarly, compared to the relufied OPT 2.7B model, the equivalent (in FLOPS) dense model performs almost 2\% lower. Indeed, the fact that the relufied models lie well above the scaling graph of the original OPT models, shows the effectiveness of relufication processes as a method to get better but more efficient models. 
As a side benefit, it makes the efficiency spectrum of the available LLMs more continuous. For example, consider a combination of hardware and use case that only allows deploying LLMs with lower than 3 GFLOPS during inference. Going with standard pretrained models, the only available option is OPT 2.7B with almost 1 GFLOPS, as the 6.7B does not satisfy the hardware constraint. In this situation, our relufied model not only falls in the limited inference budget but is also very close to the next largest available model in terms of accraucy. An exciting and timely direction for future research is finding methods, that, given an LLM (or a family of LLMs), are able to produce the best performing model matching the specified inference computation budget.

\begin{figure}[h]
\centering
\includegraphics[width= 0.45\textwidth]{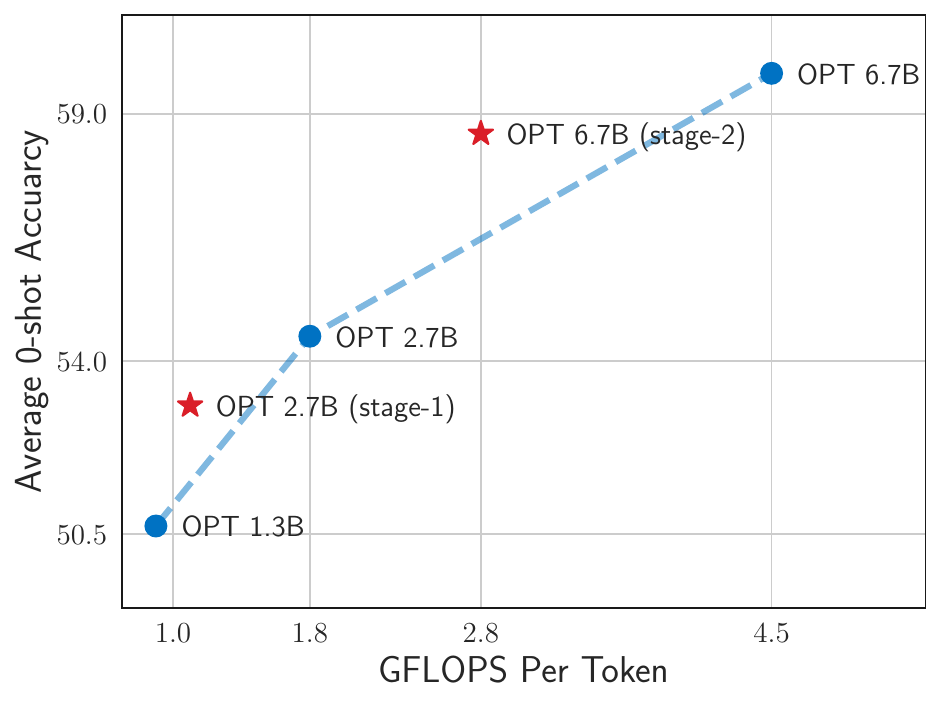}
\caption{Performance of sparse large models vs. dense smaller models: The relufied large models (red stars) are above the scaling curve of original dense models (blue circles and dashed line).}
\label{fig:relufication-results-scaling}
\end{figure}

\end{document}